\let\icml@origaddcontentsline\addcontentsline
\let\addcontentsline\icml@origaddcontentsline
\theoremstyle{plain}
\newtheorem{theorem}{Theorem}[section]
\newtheorem{lemma}[theorem]{Lemma}
\newtheorem{corollary}[theorem]{Corollary}
\theoremstyle{definition}
\newtheorem{definition}[theorem]{Definition}
\newtheorem{assumption}[theorem]{Assumption}
\theoremstyle{remark}
\icmltitlerunning{Noise-Guided Transport}
\begin{document}

\twocolumn[
  \icmltitle{Noise-Guided Transport: Imitation Learning from Random Priors}

  \begin{icmlauthorlist}
    \icmlauthor{Lionel Blondé}{uas}
    \icmlauthor{Joao A. Candido Ramos}{unige,uas}
    \icmlauthor{Alexandros Kalousis}{uas}
  \end{icmlauthorlist}


  \icmlaffiliation{unige}{University of Geneva, Switzerland}
  \icmlaffiliation{uas}{University of Applied Sciences Western Switzerland, Geneva}

  \icmlcorrespondingauthor{Lionel Blondé}{mail@lionelblonde.com}


  \vskip 0.3in
]

\printAffiliationsAndNotice{}  

\begin{abstract}
We consider imitation learning in the low-data regime, where only a limited number of expert demonstrations are available. In this setting, methods that rely on large-scale pretraining or high-capacity architectures can be difficult to apply, and efficiency with respect to demonstration data becomes critical. We introduce Noise-Guided Transport (NGT), a lightweight off-policy method that casts imitation as an optimal transport problem solved via adversarial training. NGT requires no pretraining or specialized architectures, incorporates uncertainty estimation by design, and is easy to implement and tune. Despite its simplicity, NGT achieves strong performance on challenging continuous control tasks, including high-dimensional Humanoid tasks, under ultra-low data regimes with as few as 20 transitions.

\end{abstract}

\section{Introduction}
\label{intro}
\begin{figure*}[!t]
    \centering
    \includegraphics[width=0.72\textwidth]{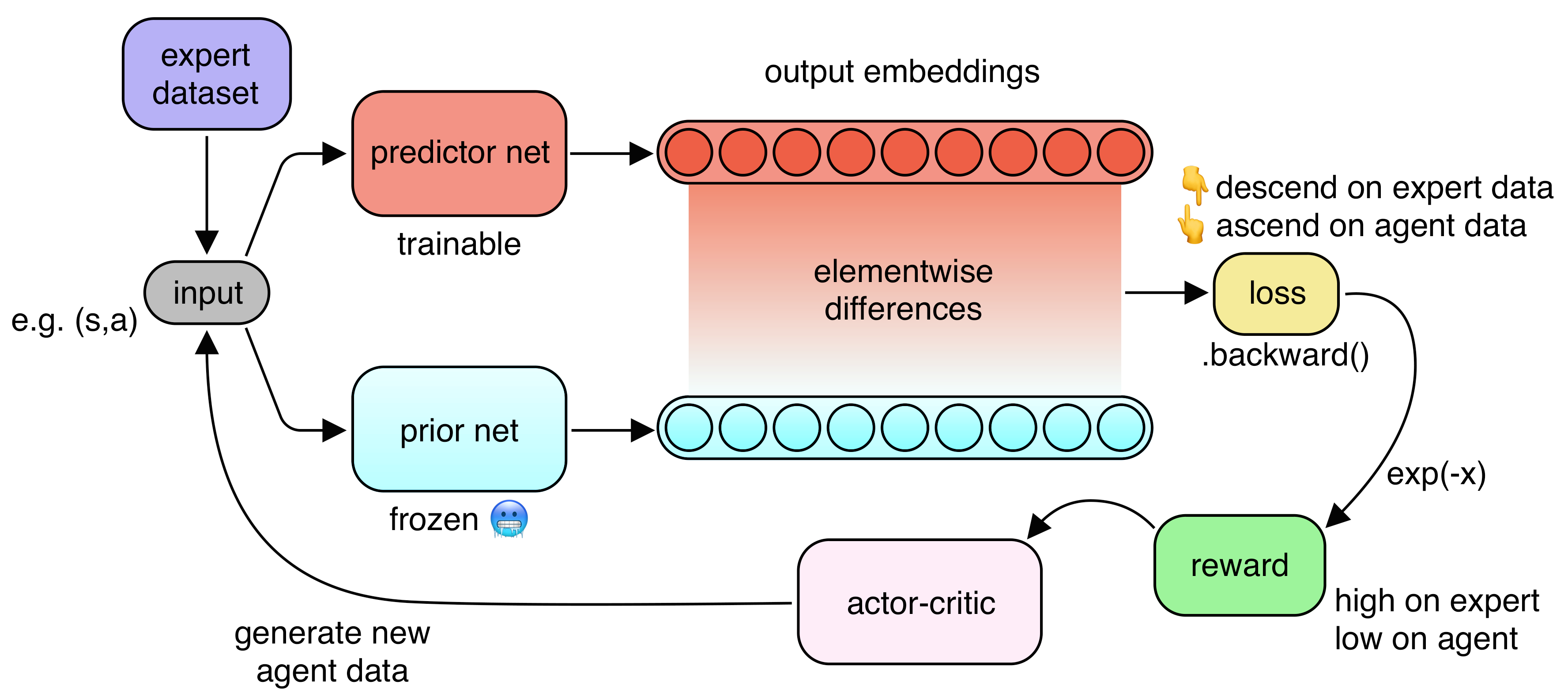}
    \caption{Noise-Guided Transport; emphasis on how the reward surrogate is learned.}
    \label{fig:diag}
\end{figure*}

The recent advent of pretrained, internet-scale vision–language models has made supervised learning techniques
like behavioral cloning (BC) \citep{Pomerleau1989-nh}
viable for imitation learning (IL) in the very large data regime \citep{Black2024-qk,Amin2025-xp},
where tens of thousands of state-actions pairs are available.
Yet, \textbf{in low-data regimes} where only a handful of expert demonstrations are available,
BC often fails to generalize.
This challenge is common in healthcare applications
such as human gait analysis from impaired patients,
where acquiring diverse, high-quality demonstrations
is constrained by (i) patient availability and (ii) clinical variability.
Limited diversity in demonstrations often leads BC to accumulate compounding errors at test time \citep{Ross2010-eb}.
In this work, we devise a sample-efficient method for IL to address scenarios with scarce expert data.

Apprenticeship learning \citep{Abbeel2004-rb}%
---inverse RL (IRL) in an inner loop; RL in an outer loop---%
mitigates the compounding errors BC suffers from by being online.
Yet, IRL is tedious because it is ambiguous.
This ambiguity was identified by \citet{Ziebart2008-fe} and solved with maximum entropy IRL (MaxEnt IRL).
\citet{Finn2016-uj} later showed that GANs \citep{Goodfellow2014-yk} solve the same objective as MaxEnt IRL.
These observations align with the sustained success of GAIL \citep{Ho2016-bv} in IL.
Given our focus on sample efficiency%
---in terms of expert dataset size but also interactions with the world---%
we turn to the off-policy evolution of GAIL, specifically DAC \citep{Kostrikov2019-jo} and SAM \citep{Blonde2019-vc}.
Inheriting from GANs, these adversarial IL (AIL) approaches minimize the JS-divergence
between the expert and the agent by using a binary classifier
that is trained to discriminate between their respective state-action distributions.
AIL has then quickly been extended to other divergences and distances \citep{Ghasemipour2019-ft,Ke2019-np}.

Our core desideratum is: learning a reward function that distinguishes expert from agent.
In this work, we learn a reward with an objective that stems from the problem of learning from random priors,
whose foundations we lay out in \textsc{Section}~\ref{bg}.
In \textsc{Section}~\ref{method}, we derive our reward learning objective,
before showing that it coincides with an earth-mover distance,
a metric grounded in optimal transport (OT) theory \citep{Villani2009-hu}.
We refer to our method as \textbf{Noise-Guided Transport (NGT)},
due to its reliance on guidance from random priors (akin to noise; see \textsc{Section}~\ref{bg})
and its OT equivalence.
We also provide guarantees for the loss optimized in practice,
showing how its deviation from the true objective concentrates with sample size.
In \textsc{Section}~\ref{exps},
we empirically evaluate and compare NGT in the low-data regime against a diverse set of baselines,
including OT-based methods and a diffusion-based AIL method \citep{Wang2023-kz}.
We benchmark the methods on standard continuous control tasks for low-data imitation learning.
Notably, we include humanoid locomotion: a high-dimensional, challenging control task
that is rarely addressed in this regime
due to its complex dynamics and large state-action space.
We also tackle the state-only setting, where expert actions are not available
Among the baselines, only a diffusion-based approach is able to make progress on this task,
albeit sub-optimally and with greater computational overhead.
Overall, our results demonstrate that NGT scales gracefully with both task complexity and data scarcity,
all while remaining lightweight.
%

\section{Background and Setting}
\label{bg}
\paragraph{Learning from demonstrations.}
We consider an agent that interacts with a Markov Decision Process (MDP) $(\mathbb{S}, \mathbb{A}, P, r, \gamma)$.
$\mathbb{S}$ and $\mathbb{A}$ denote the state and action spaces,
$P$ the transition dynamics, $r$ the reward function, and $\gamma \in [0, 1)$ the discount factor.
We work in the episodic setting,
where $\gamma$ resets to $0$ upon episode termination,
reflecting finite-horizon rollouts up to $T$. 
A policy $\pi(a|s)$ specifies a distribution over actions conditioned on states.
The agent acts in the environment by following its policy in order to maximize expected cumulative rewards.
In IL, the reward function is \emph{unknown}.
The objective is therefore to learn a policy that reproduces the behavior of an expert
from a demonstration dataset $\mathcal{E}$,
which typically contains trajectories ${(s_t, a_t)}_{t=1}^T$.
We want our imitator agent to learn a robust reward signal $r_{\xi}$ from $\mathcal{E}$.
When the demonstrations contain only states,
the task becomes to learn a policy whose state occupancy matches that of the expert.
Formally, the input space of the reward model $r_{\xi}$ can be $\mathbb{S} \times \mathbb{A}$,
$\mathbb{S} \times \mathbb{S}$ (\textit{state-state} case) or $\mathbb{S}$ (\textit{state-only} case).
We will use $\mathbb{X}$ to denote any of these options.
$P_{\operatorname{expert}}$ denotes the probability distribution of expert data over $\mathbb{X}$,
and $P_{\operatorname{agent}}$ the one described by the agent.
Finally, $P(\mathbb{X})$ denotes an arbitrary distribution over $\mathbb{X}$.

\paragraph{Learning architecture and algorithm.}
The reward function is learned jointly with the agent,
itself learned via an actor-critic architecture \citep{Crites1995-hn},
which comprises the policy (actor) $\pi_{\theta}$ and a action-value (critic)
$Q_{\omega}: \mathcal{S} \times \mathcal{A} \to \mathbb{R}$.
The critic is responsible for assigning credit (from $r_{\xi}$) to actions (from $\pi_{\theta}$) in time,
and is learned jointly with the policy by generalized policy improvement \citep{Sutton2018-hb}.
The three functions are therefore learned jointly, and their updates are interlaced.
They are modeled with neural networks, with parameters $\theta$, $\omega$, and $\xi$.
In addition, all three are learned in an off-policy fashion.
This trait is supported by a replay buffer, enabling the agent to replay past experiences \citep{Lin1992-pp}.
Specifically, NGT learns $\pi_{\theta}$ and $Q_{\omega}$ with
the Soft Actor-Critic (SAC) off-policy RL algorithm \citep{Haarnoja2018-bm},
with the apprenticeship reward $r_{\xi}$ learned in an interweaved inner loop.
The crux of this work resides in the design of a novel reward learning method%
---based on the problem of learning from random priors---%
which we introduce in \textsc{Section}~\ref{method}.

\paragraph{Learning from random priors.}
The problem of \textit{learning from random priors} refers to a prediction setup in which
a frozen, randomly initialized neural network provides a deterministic target mapping,
and a second network is trained to match its outputs.
Let $f_{\xi}^{\dagger}: \mathbb{X} \to \mathbb{R}^{m}$
denote a neural network that is randomly initialized once and frozen thereafter.
A predictor network $f_{\xi}$ identical in architecture but trainable,
is optimized to match the outputs of this prior.
Given a nonnegative matching loss $\ell$, the problem is:
$(\mathfrak{P}): \inf_{\xi} \; \mathbb{E}_{x \sim P(\mathbb{X})}
\Big[\ell\big(f_{\xi}(x),f^{\dagger}_{\xi}(x)\big)\Big]$
where $P(\mathbb{X})$ is an arbitrary distribution over the input space.
Because $f^{\dagger}_{\xi}$ is fixed,
gradient steps on $(\mathfrak{P})$ decrease the discrepancy
$\ell\big(f_{\xi}(x),f^{\dagger}_{\xi}(x)\big)$
on samples drawn from $P$, while leaving its behavior largely unconstrained outside the support of $P$.
This makes the predictor-prior discrepancy a form of pseudo-density estimator:
low values indicate regions where the predictor has adapted to the distribution,
and high values indicate regions it has not.
In \textsc{Section}~\ref{method}, we derive our own adversarial reward learning objective
from this principle.

\section{Related Works}
\label{rw}
\paragraph{OT in AIL.}
DAC and SAM \citep{Kostrikov2019-jo,Blonde2019-vc} have gained recognition for their sample efficiency and simplicity.
These methods frame IL as minimizing a divergence between the agent and expert distributions,
in line with the original GAN formulation \citep{Goodfellow2014-yk}.
This divergence minimization can be interpreted through the lens of OT,
both in its primal and dual formulations \citep{Chang2023-ac}.
PWIL \citep{Dadashi2021-nl} approximates the primal form of the $\operatorname{EMD}$ with an iterative procedure,
while methods like ROT \citep{Haldar2022-dy} and MAAD \citep{Ramos2024-om} use the Sinkhorn algorithm%
---solving an entropy-regularized approximation of the $\operatorname{EMD}$ in the primal formulation---%
to match full trajectories.
In contrast, the dual formulation approximates OT by learning the Kantorovich potentials using neural networks,
as done in the adversarial training setup of WGAN \citep{Arjovsky2017-la,Peng2021-ge}.
NGT also adopts a dual OT perspective.
In contrast, AILBoost \citep{Chang2024-ot} adopts an orthogonal strategy by enhancing DAC through boosting,
a classic ensemble learning technique.

\paragraph{Learning from random priors.}
This principle has appeared in several contexts.
Early work on randomized value functions showed that injecting fixed random components into value estimates
can act as an implicit Bayesian prior and improve generalization \citep{Osband2014-jn}.
Randomized prior functions formalized this idea by adding a fixed random network to the value function and
interpreting the residual as the learnable component \citep{Osband2018-rq}.
A similar mechanism underlies prediction-based pseudo-density estimation (PDE) methods
such as Random Network Distillation (RND),
where the predictor is trained to match a frozen random target and
the prediction error captures distributional mismatch as novelty for exploration \citep{Burda2018-vl}.
Such PDE from random priors has also been used to learn
an \emph{expert} detector that guides imitation in offline IL,
in an approach called Random Expert Distillation (RED) \citep{Wang2019-pd}.
Alternatively, such PDE could also be geared towards out-of-distribution (OOD) avoidance,
also called anti-exploration, in offline RL.
Notably, \citet{Rezaeifar2022-ma} reports that RND is ineffective for anti-exploration
in continuous control tasks in offline RL.
However, later findings showed that it performs well when the predictor and prior networks
use specific asymmetric architectures that learn separate representations
for states and actions before merging them \citep{Nikulin2023-gu}.
In this work, we demonstrate that NGT can leverage random priors for continuous control beyond feature engineering.
In addition, \citet{Ciosek2020-bq} provides theoretical grounding for RND,
analyzing concentration properties that characterize the conditions under which the difference
between the prior and predictor vanishes.

We expand on related works further in \textsc{Appendix}~\ref{apdx:rw}.

\section{Method and Guarantees}
\label{method}
\subsection{Reward Learning: Objective}
\label{setup}

We now introduce our reward learning objective,
deriving an adversarial training objective from the problem of learning from random priors
introduced in \textsc{Section}~\ref{bg}.
We also illustrate it in \textsc{Figure}~\ref{fig:diag}.

By comparing the outputs of predictor $f_{\xi}$ and prior $f^{\dagger}_{\xi}$ through a non-negative loss $\ell$%
---formalized in the problem formulation 
$(\mathfrak{P}): \inf_{\xi} \; \mathbb{E}_{x \sim P(\mathbb{X})}
\Big[\ell\big(f_{\xi}(x),f^{\dagger}_{\xi}(x)\big)\Big]$%
---%
we have at our disposal a task whose complexity can scale depending jointly on several factors:
the architecture of the networks $f_{\xi}$ and $f^{\dagger}_{\xi}$,
the size of the output embedding $m$, and the properties of the loss $\ell$.
In particular, the magnitude and variability of the matching loss carry information
about the epistemic uncertainty associated with approximating the random target in $\mathbb{R}^m$.
Performing gradient descent on $(\mathfrak{P})$ lowers the discrepancy
$\ell\big(f_{\xi}(x),f^{\dagger}_{\xi}(x)\big)$
on samples drawn from $P(\mathbb{X})$, up to the representational limits of the architecture.
As a result, the matching loss naturally forms a pseudo-density or pseudo-indicator signal
(``pseudo'': it does not integrate to $1$):
regions frequently sampled from $P$ yield low predictor–prior discrepancy,
while regions encountered rarely or not at all yield higher discrepancy.

Optimizing $(\mathfrak{P})$ with $P_{\operatorname{expert}}$ as $P(\mathbb{X})$
learns an expert detector.
\citet{Wang2019-pd} learns such a detector, in an offline manner, and formulates an RL reward from it.
Albeit sound, it falls short of capturing the expert distribution in practice.
We argue that what plagues the method is that:
\textit{(a)} the pseudo-density is learned entirely offline, and
\textit{(b)} it posits that we only have \emph{positive} signal (guided by $P_{\operatorname{expert}}$),
closely resembling one-class classification or positive-unlabeled learning.
In fact, for the overwhelming majority of the agent's learning lifespan, the agent has sub-optimal behavior.
Its behavior (informed by $P_{\operatorname{agent}}$---whether on-policy or off-policy%
\footnote{
Under the off-policy regime, the $P_{\operatorname{agent}}$ shorthand designates following the off-policy distribution $\beta$ resulting from sampling experiences uniformly, without loss of generality, from the replay buffer.
In effect, $\beta$ is a mixture of past $\pi_{\theta}$ updates.
The bigger the buffer capacity, the older the oldest policy in the mixture.
In the on-policy setting, following $P_{\operatorname{agent}}$ would simply mean following the policy $\pi_{\theta}$.
})
could therefore be treated as \emph{negative} signal.
This design position, in the classification analogy,
turns one-class into binary classification.
This is a view adopted by adversarial methods that train a reward model
as the discriminator of a GAN, which is a binary classifier.
IL methods based on GANs optimize a $\operatorname{JS}$-divergence between
distributions $P_{\operatorname{expert}}$ and $P_{\operatorname{agent}}$,
which suffers from mode collapse and vanishing gradients when the supports do not overlap.
This leads to instability or failure to train effectively.
As such, they require careful regularization, especially in off-policy learning.
In this work, we instead build an adversarial training scheme
from the problem of predicting random priors in $\mathbb{R}^m$.
Not only do we descend the gradients of $\ell\big(f_{\xi}(x),f^{\dagger}_{\xi}(x)\big)$ on expert data,
we also ascend its gradients on agent-generated data.
By using $h_{\xi}$ as a shorthand for $x \mapsto \ell\big(f_{\xi}(x),f^{\dagger}_{\xi}(x)\big)$,
we define the loss $L(\xi)$ that will be the foundation of our adversarial training procedure:
\begin{equation}
\label{rewloss}
L(\xi) \coloneqq%
\mathbb{E}_{x \sim P_{\operatorname{expert}}}\big[h_{\xi}(x)\big] -%
\mathbb{E}_{x \sim P_{\operatorname{agent}}}\big[h_{\xi}(x)\big]
\end{equation}

We refer to $h_{\xi}$ as the potential function, or simply the potential.
To sum up, minimizing the loss $L(\xi)$ above
trains the potential function $h_{\xi}$ to assign low values to expert data and high values to agent data.
Therefore, composing $h_{\xi}$ with a monotonically decreasing transform naturally inverts this trend%
---assigning high values to expert states and low values to agent states.
This is exactly the behavior we seek for reward assignment.
Accordingly, we define the reward directly from the learned potential $h_{\xi}$ as:
$r_{\xi}(x) \coloneqq \exp\big({- h_{\xi}(x)}\big)$.
This choice ensures positivity, bounds the reward between 0 and 1 (the pairing loss $\ell$ is non-negative),
and sharpens the contrast between expert and agent behavior.
We discuss the reward numerics in \textsc{Appendix}~\ref{rewnum}.

Next, we show that $L(\xi)$ enjoys empirical concentration guarantees,
thereby enabling the agent to provably close the gap with the expert.
In addition,
now that all the learning components have been introduced, we point the reader
to \textsc{Algorithm}~\ref{alg:ngt} for the complete algorithmic outline of NGT.

\subsection{Reward Learning: Theoretical Grounding}
\label{theoground}

In this section,
(i) we show that the adversarial objective $L(\xi)$ (\textsc{Eq}~\ref{rewloss}),
derived in \textsc{Section}~\ref{setup},
is equivalent to an objective grounded in OT theory.
(ii) We also characterize how tightly its empirical estimate concentrates%
---in other words, how closely it approximates the \textit{true} objective $L(\xi)$.

We define $H^{\Lambda}_{\xi}$ as the set of potential functions $h_{\xi}: x \mapsto \ell\big(f_{\xi}(x),f^{\dagger}_{\xi}(x)\big)$ that are $\Lambda$-Lipschitz.
Formally,
$H^{\Lambda}_{\xi} \coloneqq%
\big\{
h_{\xi}: \mathbb{X} \to \mathbb{R}_{+}; x \mapsto \ell\big(f_{\xi}(x),f^{\dagger}_{\xi}(x)\big) \mid
|h_{\xi}(x) - h_{\xi}(x')| \leq \Lambda \, d(x,x'), \forall x, x' \in \mathbb{X}\big\}$
where $d$ is a ground metric over the input space $\mathbb{X}$, and $\Lambda < +\infty$.
In particular, we make the following design choice:
to train our reward model, we restrict the search for a potential function $h_{\xi}$ that minimizes $L(\xi)$ to the case where $\Lambda = 1$.
That is, we optimize \textsc{Eq}~\ref{rewloss} over the space of $1$-Lipschitz potentials.
Using the formalism above: we look for a function $h_{\xi} \in H^{1}_{\xi}$ that is the infimum of $L(\xi)$.

We observe that:
\begin{align}
\inf_{h_{\xi} \in H_{\xi}^{1}}{L(\xi)}
&=
- \sup_{h_{\xi} \in H_{\xi}^{1}} \Big(
\mathbb{E}_{x \sim P_{\operatorname{agent}}}\big[h_{\xi}(x)\big] \notag \\
&\qquad\qquad
- \mathbb{E}_{x \sim P_{\operatorname{expert}}}\big[h_{\xi}(x)\big]
\Big) \notag \\
&= - \operatorname{EMD}(P_{\operatorname{agent}}, P_{\operatorname{expert}}) \label{emdeq}
\end{align}
where $\operatorname{EMD}$ is the earth mover's distance between the two distributions.
It quantifies the dissimilarity between two distributions,
calculating the total effort required to transform (or transport)
one into the other ($P_{\operatorname{agent}} \to P_{\operatorname{expert}}$).
\textsc{Eq}~\ref{emdeq} shows that when we update $\xi$ to minimize $L(\xi)$, we update $\xi$ to \emph{maximize} the $\operatorname{EMD}$ between the distributions.
As a result, descending along the gradients of $L(\xi)$---while ensuring that
$h_{\xi} \in H^{1}_{\xi}$%
---maximizes the discrepancies between $P_{\operatorname{agent}}$ and $P_{\operatorname{expert}}$.
In the context of OT, the learned potential function ($h_{\xi}$) embodies
the Kantorovich-Rubinstein duality by
iteratively approximating the optimal \emph{dual} potentials that define the $\operatorname{EMD}$.
This approach aligns with the essence of the dual formulation:
encoding the \emph{cost landscape} of the transport problem.
While the dual form has two potentials (one per $\mathbb{E}[\cdot]$), we learn only one ($h_{\xi}$).
Specifically, we use $h_{\xi}$ for the first expectation, and $-h_{\xi}$ for the second.
This design choice\footnote{%
A similar design choice was made by \cite{Arjovsky2017-la} (the $\operatorname{EMD}$ is the Wasserstein-$1$ distance).
In \textsc{Appendix}~\ref{wgan}, we discuss how NGT relates to the problem formulated by \cite{Arjovsky2017-la}.
}
reduces the dual constraint (from the dual formulation of the $\operatorname{EMD}$)
into a $1$-Lipschitz continuity constraint on the single potential%
---hence our choice to look for $h_{\xi} \in H_{\xi}^{1}$%
---thereby ensuring consistency with the primal transport problem.

\textbf{The method:}
We name our method Noise-Guided Transport (NGT) because:
(i) Minimizing $L(\xi)$ guides $f_{\xi}(x)$
toward the noise returned by the prior network $f_{\xi}^\dagger(x)$ on expert data
while pushing it away on agent data;
(ii) The resulting $h_{\xi}$ yields an OT cost landscape.
\textbf{In summary:} 
NGT derives its reward from an OT cost landscape $h_{\xi}$
designed to accentuate the gaps between the expert and agent distributions
$P_{\operatorname{expert}}$ and $P_{\operatorname{agent}}$.
This landscape sharpens the contrast between their occupancies,
making the expert signal easier to discern.
The agent ($\pi_{\theta}$) optimizes its actions on this landscape,
which steers its behavior toward the expert and
in effect reduces the discrepancies that $h_{\xi}$ exploits.

Finally, we derive a \textbf{concentration bound} for its empirical estimate $\hat{L}(\xi)$,
computed from finite samples drawn from $P_{\operatorname{expert}}$ and $P_{\operatorname{agent}}$.
We present the full analysis and proof in \textsc{Appendix}~\ref{theory:concentration}.
The result shows that $\hat{L}(\xi)$ converges to its expected value $L(\xi)$ at an exponential rate,
with deviation controlled by the Lipschitz constant of the potential and the diameter of the input space.
This bound quantifies the sample efficiency of our method and
ensures that the empirical loss provides a reliable approximation of the true objective,
giving theoretical control over generalization from finite samples.

\subsection{Reward Learning: Practical Execution}
\label{execution}

In order for the potential $h_{\xi}: x \mapsto \ell\big(f_{\xi}(x),f^{\dagger}_{\xi}(x)\big)$%
---central to our reward design (\textsc{Section}~\ref{setup})%
---to satisfy the equivalence laid out in \textsc{Eq}~\ref{emdeq} (\textsc{Section}~\ref{theoground}),
it must be $1$-Lipschitz: $h_{\xi} \in H_{\xi}^{1}$.
Thus, we now examine how the Lipschitz constant of $h_{\xi}$
is governed by the Lipschitz properties of its constituent functions;
namely,the predictor and prior networks $f_{\xi}$ and $f^{\dagger}_{\xi}$, and the pairing function $\ell$.

\begin{theorem}[Lipschitz constant of $h_{\xi}$]
\label{th:hlip}
Let $\Lambda(\cdot)$ denote the Lipschitz constant of a given function.
By construction, $h_{\xi}$ is $\Lambda(h_{\xi})$-Lipschitz continuous
\textit{w.r.t.} a ground metric $d$ over $\mathbb{X}$ with, $\forall x_1, x_2 \in \mathbb{X}$:
\begin{equation}
\label{hlip}
\Lambda(h_{\xi}) = \Lambda(\ell) \big(\Lambda(f_{\xi}) + \Lambda(f^{\dagger}_{\xi})\big)
\end{equation}
as Lipschitz constant.
[Proof provided in \textsc{Appendix}~\ref{theory:proofhlip}; results directly from function composition.]
\end{theorem}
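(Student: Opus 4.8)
The plan is to prove the bound directly from the definition of Lipschitz continuity, by decomposing the variation of $h_\xi$ into contributions from its two constituent networks. Fix arbitrary $x_1, x_2 \in \mathbb{X}$. Since $h_\xi(x) = \ell\big(f_\xi(x), f^{\dagger}_\xi(x)\big)$, the quantity to control is $\big|\ell(f_\xi(x_1), f^{\dagger}_\xi(x_1)) - \ell(f_\xi(x_2), f^{\dagger}_\xi(x_2))\big|$. The first move is to read ``$\ell$ is $\Lambda(\ell)$-Lipschitz'' as Lipschitz continuity in each of its two arguments separately, with the common constant $\Lambda(\ell)$ (equivalently, joint Lipschitz continuity with respect to the sum metric on $\mathbb{R}^m \times \mathbb{R}^m$). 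I would state this convention explicitly up front, since it is exactly what makes the two network contributions add rather than combine in some other way.

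The core step is an intermediate-argument (telescoping) decomposition: insert the mixed term $\ell(f_\xi(x_2), f^{\dagger}_\xi(x_1))$ and apply the triangle inequality, producing two differences, each varying only one argument of $\ell$ at a time. Applying the one-argument Lipschitz property of $\ell$ bounds the first difference by $\Lambda(\ell)\,\big\|f_\xi(x_1) - f_\xi(x_2)\big\|$ and the second by $\Lambda(\ell)\,\big\|f^{\dagger}_\xi(x_1) - f^{\dagger}_\xi(x_2)\big\|$. Next I would invoke the Lipschitz continuity of the two networks with respect to the ground metric $d$, namely $\big\|f_\xi(x_1) - f_\xi(x_2)\big\| \le \Lambda(f_\xi)\, d(x_1, x_2)$ and similarly for $f^{\dagger}_\xi$. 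Substituting and factoring out $\Lambda(\ell)\, d(x_1, x_2)$ yields $\big|h_\xi(x_1) - h_\xi(x_2)\big| \le \Lambda(\ell)\big(\Lambda(f_\xi) + \Lambda(f^{\dagger}_\xi)\big)\, d(x_1, x_2)$, which is precisely \textsc{Eq}~\ref{hlip}. Since $x_1, x_2$ were arbitrary, $h_\xi$ is $\Lambda(\ell)\big(\Lambda(f_\xi) + \Lambda(f^{\dagger}_\xi)\big)$-Lipschitz.

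The main obstacle here is not analytic difficulty but bookkeeping about the metric on the output product space $\mathbb{R}^m \times \mathbb{R}^m$: depending on whether one equips it with the $\ell_1$ (sum), $\ell_2$ (Euclidean), or $\ell_\infty$ (max) product metric, the two network constants would combine additively, as $\sqrt{\Lambda(f_\xi)^2 + \Lambda(f^{\dagger}_\xi)^2}$, or as a maximum, respectively. The additive form in \textsc{Eq}~\ref{hlip} corresponds to the sum convention, which is exactly what the intermediate-argument decomposition produces; I would therefore justify that convention (equivalently, the per-argument reading of $\Lambda(\ell)$) rather than leave it implicit. A secondary point worth flagging is that this is an upper bound on the Lipschitz constant and need not be tight---it is the worst-case composition estimate---but tightness is unnecessary for the later use of $h_\xi \in H_{\xi}^{1}$, so establishing the inequality suffices.
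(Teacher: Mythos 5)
Your proof is correct and follows essentially the same route as the paper's: bound $\big|h_{\xi}(x_1) - h_{\xi}(x_2)\big|$ by the Lipschitz property of $\ell$ (with the two network contributions combining additively), then apply the Lipschitz constants of $f_{\xi}$ and $f^{\dagger}_{\xi}$ with respect to $d$, and factor. Your telescoping step through the mixed term $\ell\big(f_{\xi}(x_2), f^{\dagger}_{\xi}(x_1)\big)$ and your explicit statement of the sum-metric convention on $\mathbb{R}^m \times \mathbb{R}^m$ merely make rigorous the single inequality the paper writes without justification, and your observation that the result is an upper bound rather than an exact constant is a fair (and correct) refinement of the theorem's statement.
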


Having characterized how the Lipschitz constant of the potential $h_{\xi}$
depends on those of $f_{\xi}$, $f^{\dagger}_{\xi}$, and $\ell$,
we now turn to the practical question of how to ensure that the condition $h_{\xi} \in H_{\xi}^{1}$ is satisfied;
that is, how to ensure that $\Lambda(h_{\xi}) \leq 1$ holds in practice over $\mathbb{X}$.
The following subsections describe the specific design choices we make (\textit{e.g.}, architecture)
to enforce this property in our method.
We refer the reader to the extensive ablation studies presented in \textsc{Appendix}~\ref{ablations}
that corroborate those choices.

\subsubsection{Controlling the values of $\Lambda(f_{\xi})$ and $\Lambda(f^{\dagger}_{\xi})$}
\label{controlf}

\textsc{Eq}~\ref{hlip} shows that $\Lambda(f_{\xi})$ and $\Lambda(f^{\dagger}_{\xi})$ compound additively,
and their sum compounds multiplicatively with $\Lambda(\ell)$.
Therefore, $\ell$ can still act as a soft gate or low-pass filter
that could prevent occasional spikes upstream and thereby cause destructive updates of the reward model.
Importantly, $\Lambda(\ell)$ is fixed and does not change with $\xi$ updates
---although it could be made to follow a schedule or heuristic.
The same goes for $\Lambda(f^{\dagger}_{\xi})$, since the prior network is never updated after initialization.
As a result, $\Lambda(\ell)$ and $\Lambda(f^{\dagger}_{\xi})$ depend only on design choices,
while $\Lambda(f_{\xi})$ can be altered during training.
In practice, we do not aim for the ``perfect-$1$'' Lipschitz constant $\Lambda(h_{\xi})$,
i.e.\ $h_{\xi} \in H^{1}_{\xi}$;
We have found that it is empirically enough to tame its value and keep it
``close enough to $1$'' so as to avoid surges and spikes.
We use spectral normalization (SN) \citep{Miyato2018-wc}
on every linear layer of the predictor $f_{\xi}$ and prior $f^{\dagger}_{\xi}$,
which constrains their Lipschitz constant by maintaining unit spectral norm.
It is then the choice of non-linearity\footnote{%
\citet{Anil2019-hy} discusses how the choice of activation impacts
whether the function approximator can actually be Lipschitz continuous
when one encourages it to be via regularization.}
in $f_{\xi}$ and $f^{\dagger}_{\xi}$ that dictates how their Lipschitz constants deviates from $1$.

Given that the prior $f^{\dagger}_{\xi}$ is frozen after being initialized,
\emph{how} the networks are initialized is crucial.
We use orthogonal initialization (OI) \citep{Saxe2013-rm, Hu2020-ng} in every layer
, which ensures that the weight matrix is a norm-preserving linear transformation.
It ensures an even spread in the feature space, avoiding redundancies (reducing correlations) in output space.
Since the prior network is never updated and is initialized with OI, it has two appealing traits.
(i) The random priors---output embedding of $f^{\dagger}_{\xi}$---are the result of a full-rank map,
and therefore maximally utilize the available dimensions $m$ in the output embedding.
(ii) The linear layers already have unit singular values (orthogonal matrix).
The highest singular value is therefore already $1$: applying SN has no effect,
and would not be needed for the prior network.
So, with OI and reasonably linear-like activations (e.g.\ ReLU, LeakyReLU),
$\Lambda(f^{\dagger}_{\xi})$ should be close to unit.
Using SN is however required for the predictor.
Otherwise, $\Lambda(f_{\xi})$ can adopt an erratic behavior as $\xi$ gets updated.
Importantly, we did not need to regularize the predictor with a gradient penalty (GP) \citep{Gulrajani2017-mr},
while it is needed for every single off-policy adversarial IL SOTA baseline
(see \textsc{Section}~\ref{exps})\footnote{%
Gradient regularization was shown to be necessary in such methods \citep{Blonde2020-dh},
and also more recently in a general generative modeling context \citep{Huang2024-sz}.}.
This makes NGT cheaper and faster than its counterparts on that front.
Indeed, GP is more computationally expensive than SN.
GP effectively doubles the cost of the backward pass, while SN only adds minimal overhead.

\subsubsection{Controlling the value of $\Lambda(\ell)$}
\label{elldiscuss}

What about $\Lambda(\ell)$?
An obvious choice of loss with  $\Lambda(\ell) \leq 1$ is the $L_1$ loss,
or better yet, the Huber loss with parameter $\delta = 1$.
In fact, most of the losses borrowed from robust regression are $1$-Lipschitz and
would satisfy our regularity desideratum.
The Huber loss turned out to be an excellent option, and it acted as default in our experiments.
Notably, NGT allows for the comparison of the embeddings $f^{\dagger}_{\xi}$ and $f_{\xi}$ directly,
but they could also be pipped through another map or transform,
e.g.\ learned feature map (see discussion in \textsc{Appendix}~\ref{apdx:rw}),
or a non-learned deterministic proxy function.
We have tried the latter by wrapping both output embeddings with a softmax.
In effect, comparing the softmaxes of embeddings evaluates the similarity
between the distribution of output units\footnote{%
There is however loss of information since the match is only \textit{relative}.
This may be a boon however, and allow the agent to  focus on the relative importance of features,
rather than their absolute magnitudes.
Since the random prior vector is a near-orthogonal map of the input vector,
much of the input's original structure is preserved in it.
If the agent benefits from relative matching in the input space (e.g.\ coordination or locomotion task),
then it should benefit from relative matching in the random prior embedding.}.
We have found that this design choice gave comparable results to the Huber loss,
and could give a slight edge in certain tasks.

These losses yielded excellent results, as reported in \textsc{Section}~\ref{exps},
with the exception of the Humanoid tasks, where none of the above losses produced satisfactory performance.
We expanded the capabilities of NGT by enabling the use of distributional losses%
---specifically, the histogram loss, ``Gaussian type''---%
which were originally introduced for value learning in RL \citep{Imani2018-fx}.
We denote the loss with $\ell_{\operatorname{HLG}}$.
It depends on four hyper-parameters, $(a, b, N, \sigma)$,
where $[a,b]$ is the interval to partition into $N$ bins,
and $\sigma$ dictates the spread of the Normal distribution involved in $\ell_{\operatorname{HLG}}$.
We opted for $\ell_{\operatorname{HLG}}$ (regression $\to$ classification) because:
\textit{(i)} Spreading probability mass to neighboring locations reduces overfitting
(label smoothing, \citep{Szegedy2016-sx});
\textit{(ii)} Exploiting the ordinal structure of the regression
enhances generalization across a range of target values;
\textit{(ii)} Classification losses have proved to produce
better representations and have demonstrated greater robustness to non-stationarity.
Turning regression into classification has enabled deep RL to
finally reap benefits from scale \citep{Hafner2023-wk, Hansen2024-ld, Farebrother2024-hu}.
This aligns with the ``scaling law'' paper of \citet{Kaplan2020-pi},
which demonstrated how well cross-entropy scales.
As we show in \textsc{Section}~\ref{exps}, $\ell_{\operatorname{HLG}}$
allowed us to successfully scale NGT to the Humanoid tasks. 

Distributional losses such as $\ell_{\operatorname{HLG}}$
were originally introduced in the context of value learning,
where the targets are scalar values and the model predicts a probability distribution over $N$ discrete bins.
In our case, we repurpose this loss for reward learning,
where the learning signal comes from predicting a vector of $m$-dimensional random priors.
As a result, the prediction is effectively over $N \times m$ bins, rather than $N$ bins.
This induces an architectural asymmetry between $f_{\xi}$ and $f^{\dagger}_{\xi}$:
while the prior network $f^{\dagger}_{\xi}$ returns $m$ scalar targets,
the predictor $f_{\xi}$ must now produce an output of size $N \times m$,
representing a distribution over bins for each prior dimension.
No other loss considered in this work introduces such an asymmetry.
We describe the mechanism of $\ell_{\operatorname{HLG}}$ in detail in \textsc{Appendix}~\ref{apdx:hlg:txt0},
including how we extend the original loss to handle the extra dimension $m$ introduced by our method.

\subsubsection{From Code to Bound: Deriving $\Lambda(\ell_{\operatorname{HLG}})$ via Implementation Analysis}

We want to determine $\Lambda(\ell_{\operatorname{HLG}})$,
the Lipschitz constant of the $\ell_{\operatorname{HLG}}$ pairing loss.
In particular, an insightful bound would reveal how the Lipschitz constant
of $\ell_{\operatorname{HLG}}$ depends on its hyper-parameters,
especially $\sigma$ and $N$, which together control the label smoothing capabilities of the loss.
While \citet{Imani2018-fx}, who introduced this loss,
derived a local Lipschitz continuity bound with respect to the \emph{parameters},
we derive ours with respect to the \emph{inputs}.
Both their findings and ours point to the same insight: the gradients are well-behaved.
The part of their bound that concerns the Lipschitz continuity
with respect to the inputs is simply the absolute bin-wise difference between
predicted probability and transformed target bin value.
We go further than that stage,
considering the worst-case scenario,
and proceeding until the bound can only be expressed with the hyper-parameters of $\ell_{\operatorname{HLG}}$.

Based on our implementation of the $\ell_{\operatorname{HLG}}$ loss%
---see \textsc{Snippet}~\ref{apdx:hlg:code} in \textsc{Appendix}~\ref{apdx:hlg:txt},
we want to upper bound the Lipschitz constant of the loss (w.r.t.\ its inputs).
In the interest of space, we provide the intermediary results and
their derivations in \textsc{Appendix}~\ref{theory:hlg}.
We first establish the theoretical framework by formalizing the elements involved in the snippet of the loss.
We carry this out in \textsc{Def}~\ref{def}.
After establishing the groundwork, we present our core theoretical result,
\textsc{Th}~\ref{th:pmax}, which expresses the Lipschitz constant of $\ell_{\operatorname{HLG}}$
in terms of the maximal reachable probability mass $p_{\operatorname{max}}$.
We then develop a lemma \textsc{Lem}~\ref{lem:pmax}, in which we derive an estimate of
$p_{\operatorname{max}}$, which ultimately enables us to express $\Lambda(\ell_{\operatorname{HLG}})$
with respect to the hyper-parameters of the loss only, in \textsc{Th}~\ref{th:sigma}.

\begin{theorem}[Lipschitz continuity of $\ell_{\operatorname{HLG}}$] 
\label{th:sigma}
The histogram loss ``Gaussian type'' $\ell_{\operatorname{HLG}}$
is $\Lambda$-Lipschitz continuous with respect to the logits,
with a Lipschitz constant that verifies the inequality:
\begin{equation}
\Lambda \leq \sqrt{1 + \bigg(\frac{C}{\sigma}\bigg)^2}
\end{equation}
where $C \coloneqq \Delta s \sqrt{(N-1)/(2\pi)}$.
$\Delta s$ is the bin width (introduced in \textsc{Def}~\ref{def}).
Note, this result is subject to the approximation considerations from \textsc{Lem}~\ref{lem:pmax}.
Refer to \textsc{Appendix}~\ref{theory:hlg} for greater details.
\end{theorem}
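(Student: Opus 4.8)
The plan is to reach the stated bound as the final link in the chain \textsc{Th}~\ref{th:pmax} $\to$ \textsc{Lem}~\ref{lem:pmax} $\to$ \textsc{Th}~\ref{th:sigma}: first express $\Lambda(\ell_{\operatorname{HLG}})$ purely in terms of the largest attainable per-bin target mass $p_{\operatorname{max}}$, then replace $p_{\operatorname{max}}$ by a closed form in the hyper-parameters $(\Delta s, N, \sigma)$, and finally substitute. Since $\ell_{\operatorname{HLG}}$ is a cross-entropy between the softmax $\hat{p}$ of the logits $z$ and the binned Gaussian target $p$, I would start from the fact that its gradient with respect to the logits is the bin-wise residual $\nabla_z \ell_{\operatorname{HLG}} = \hat{p} - p$ (this is the ``absolute bin-wise difference'' that the text isolates from the analysis of \citet{Imani2018-fx}). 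Because $\ell_{\operatorname{HLG}}$ is smooth in $z$, its best $\ell_2$-Lipschitz constant over the logits equals $\sup_z \lVert \hat{p}(z) - p \rVert_2$, so the problem reduces to bounding this supremum.

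For \textsc{Th}~\ref{th:pmax} I would note that as $z$ ranges over $\mathbb{R}^N$ the softmax image $\hat{p}$ ranges over the interior of the probability simplex, so the supremum of $\lVert \hat{p} - p \rVert_2$ is approached at a vertex $e_j$, where
\begin{equation}
\lVert e_j - p \rVert_2^2 = (1 - p_j)^2 + \sum_{i \neq j} p_i^2 .
\end{equation}
I would then take the worst case in two independent steps: bound $(1-p_j)^2 \le 1$ using $p_j \in [0,1]$, and bound each of the remaining $N-1$ squared masses by the largest attainable mass, $p_i^2 \le p_{\operatorname{max}}^2$. Maximizing over the vertex index $j$ leaves the bound unchanged, yielding $\Lambda(\ell_{\operatorname{HLG}}) \le \sqrt{1 + (N-1)\,p_{\operatorname{max}}^2}$, which is the content I expect from \textsc{Th}~\ref{th:pmax}.

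The remaining work, carried out in \textsc{Lem}~\ref{lem:pmax}, is to estimate $p_{\operatorname{max}}$. The target mass in a bin is the integral of the density $\mathcal{N}(y,\sigma^2)$ over an interval of width $\Delta s$; this is maximized when the target $y$ sits at a bin center, so that the bin captures the symmetric chunk around the mode. I would bound that integral by width times peak density, $p_{\operatorname{max}} \le \Delta s \cdot \frac{1}{\sigma\sqrt{2\pi}}$. Substituting into the bound from \textsc{Th}~\ref{th:pmax} gives $(N-1)\,p_{\operatorname{max}}^2 \le (N-1)\,\frac{\Delta s^2}{2\pi\sigma^2} = (C/\sigma)^2$ with $C = \Delta s\sqrt{(N-1)/(2\pi)}$, and hence $\Lambda(\ell_{\operatorname{HLG}}) \le \sqrt{1 + (C/\sigma)^2}$, as claimed.

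I expect the main obstacle to lie in \textsc{Lem}~\ref{lem:pmax} rather than in the combination: the peak-density bound on $p_{\operatorname{max}}$ is clean only for the idealized continuous Gaussian, whereas the implemented target is a finite-support, renormalized histogram (the mass is computed from CDF differences at bin edges and rescaled to sum to one). Renormalization and edge truncation can inflate the per-bin mass slightly, which is precisely the ``approximation considerations'' the statement flags; I would need to argue these corrections are negligible in the regime $\Delta s \ll \sigma$ with $[a,b]$ wide relative to $\sigma$, so that the clean estimate $\Delta s/(\sigma\sqrt{2\pi})$ still controls $p_{\operatorname{max}}$. A secondary point to verify is that the two worst-casing steps in \textsc{Th}~\ref{th:pmax}---a vertex $\hat{p}$ together with $N-1$ bins each at $p_{\operatorname{max}}$---are jointly admissible as an \emph{upper} bound even though no single target $p$ realizes them simultaneously; this is harmless since they are used only to over-estimate, but it should be stated explicitly.
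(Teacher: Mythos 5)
Your proposal is correct and follows essentially the same route as the paper: the same three-step chain (gradient of softmax cross-entropy equals the residual $q - p$, worst-case one-hot prediction giving $\Lambda \leq \sqrt{1 + (N-1)p_{\operatorname{max}}^2}$, then the peak-density estimate $p_{\operatorname{max}} \approx \Delta s/(\sigma\sqrt{2\pi})$ substituted in), including the same caveats about renormalization/truncation that the paper relegates to the approximation conditions of \textsc{Lem}~\ref{lem:pmax}. Your two flagged verification points (the harmless joint infeasibility of the worst-case configuration, and the inflation of per-bin mass by the truncation denominator) are exactly the implicit assumptions the paper's proof also relies on.
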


\paragraph{Discussion of guarantees and implications.}
Theorem \textsc{Th}~\ref{th:sigma} shows how the Lipschitz constant of the function
$x \mapsto \ell_{\operatorname{HLG}}(x, t)$ depends on $\sigma$.
As $\sigma \to +\infty$, $C/\sigma \to 0$, so the upper bound on $\Lambda$ approaches $1$.
As $\sigma \to 0$ however, $C/\sigma \to +\infty$.
This tells us that, when the Normal distribution is extremely narrow,
the Lipschitz constant of $\ell_{\operatorname{HLG}}$ can grow unbounded.
This translates to high sensitivity and therefore poor stability.
As such, \textsc{Th}~\ref{th:sigma} advises for the use of a $\sigma$ value
that is \emph{high enough} with respect to the number of bins and
the interval bounds $a$ and $b$ to prevent unsteadiness.

\section{Experiments}
\label{exps}
\begin{figure*}[!t]
    \centering
    \includegraphics[width=\linewidth]{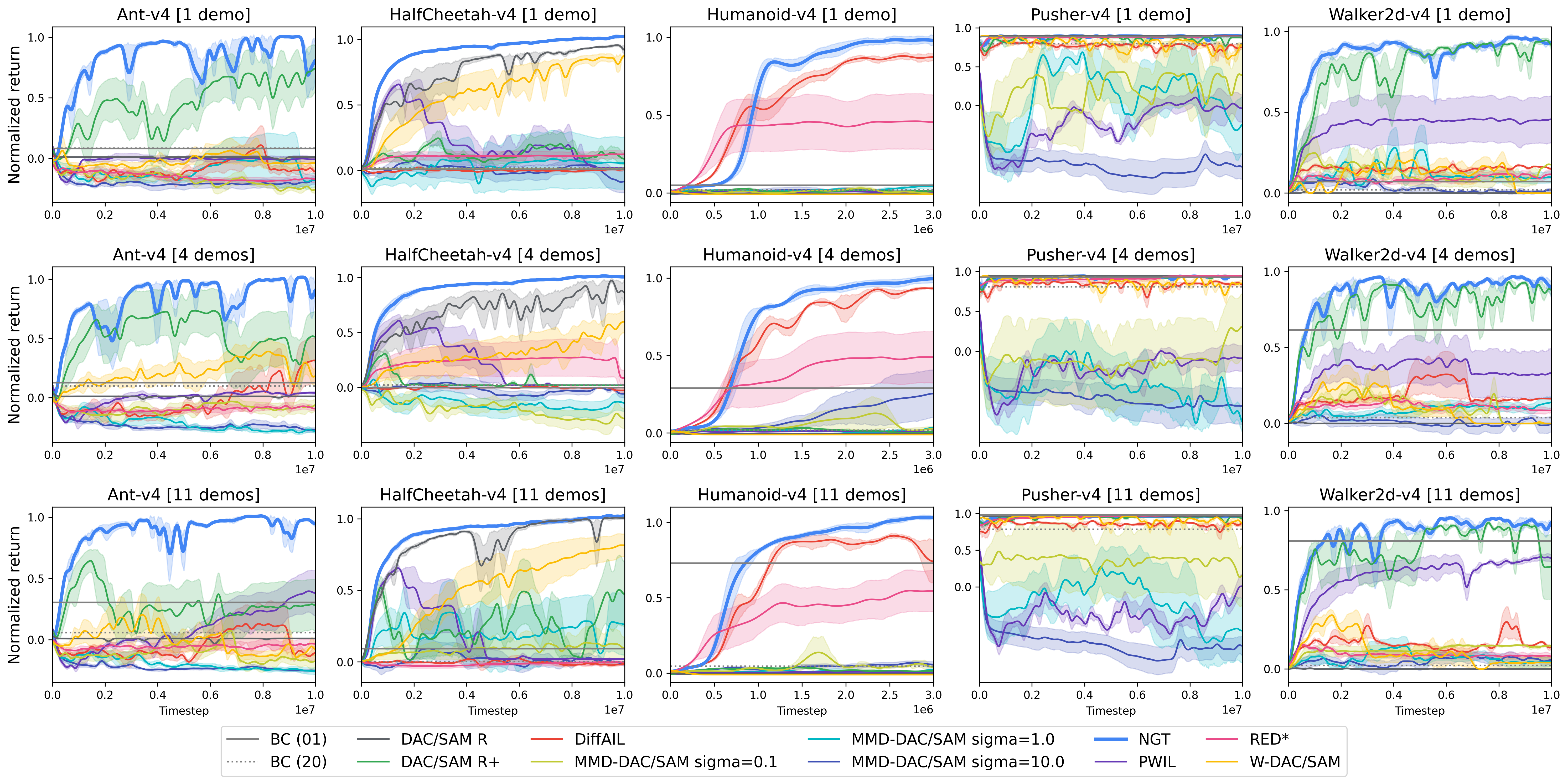}
    \caption{%
Performance comparison over various environments and numbers of demonstrations.
}
    \label{fig:main}
\end{figure*}

We now evaluate the sample-efficiency and stability of NGT against a set of baseline methods
across a suite of continuous control environments.  
We begin by describing the experimental setup, then present the baselines used for comparison.
Finally, we report the results and discuss their implications.

All methods were re-implemented from a shared SAC-based actor-critic backbone,
differing only in how the reward is computed or learned.  
\textsc{Appendix}~\ref{hps} details the modifications made to DiffAIL~\citep{Wang2023-kz},
where we adapt the original reward implementation to ensure numerical stability.  
The general network architecture used across methods is shown in \textsc{Appendix}~\ref{archi}.
We share the hyperparameters in \textsc{Appendix}~\ref{hps}.  
We compare NGT to baselines across environments with varying numbers of expert demonstrations ($1$, $4$, and $11$),
subsampled at a rate of $20$ with varying starting points $\in [0..19]$,
as in \citep{Ho2016-bv,Kostrikov2019-jo,Blonde2019-vc,Dadashi2021-nl}.
Under that setting, $1$ demonstration comprises $50$ transitions (since they originally are composed of $1000$ each).
We use vectorized environments, with $4$ parallel executors.
When one step is carried out, we increment the counter by $4$,
reflecting the \emph{true} number of interactions with \emph{an} environment.
Our experts are policies trained with SAC in the same environment as the agent, but using a different random seed.
Each experiment is run with 4 random seeds.
Importantly, during evaluation, a new seed is sampled from the initial one given to the agent at each episode reset,
ensuring that each evaluation episode uses a different environment instance.
This setup prevents the agent from memorizing trajectories and encourages genuine generalization toward expert behavior.
We tackle the Gymnasium continuous control suite \citep{towers2024gymnasium},
whose complexity culminates with \texttt{Humanoid-v4}.
We report the dimensions of the state and action spaces in \textsc{Appendix}~\ref{dims}.

As tensor software, we use PyTorch \citep{Paszke2019-zf} and CUDA Graphs (\textsc{Appendix}~\ref{cudagraphs}).
CUDA Graphs enabled up to a 3x speedup for all algorithms; $\approx5$ hours of training time for a humanoid on GPU.
Each method tested fits within the memory of an \textsc{Nvidia} RTX 4090 card,
except DiffAIL for which we suggest reducing the replay buffer capacity by $1$M to fit
(required for \texttt{Humanoid-v4}).

\begin{figure*}[!t]
    \centering
    \includegraphics[width=0.52\linewidth]{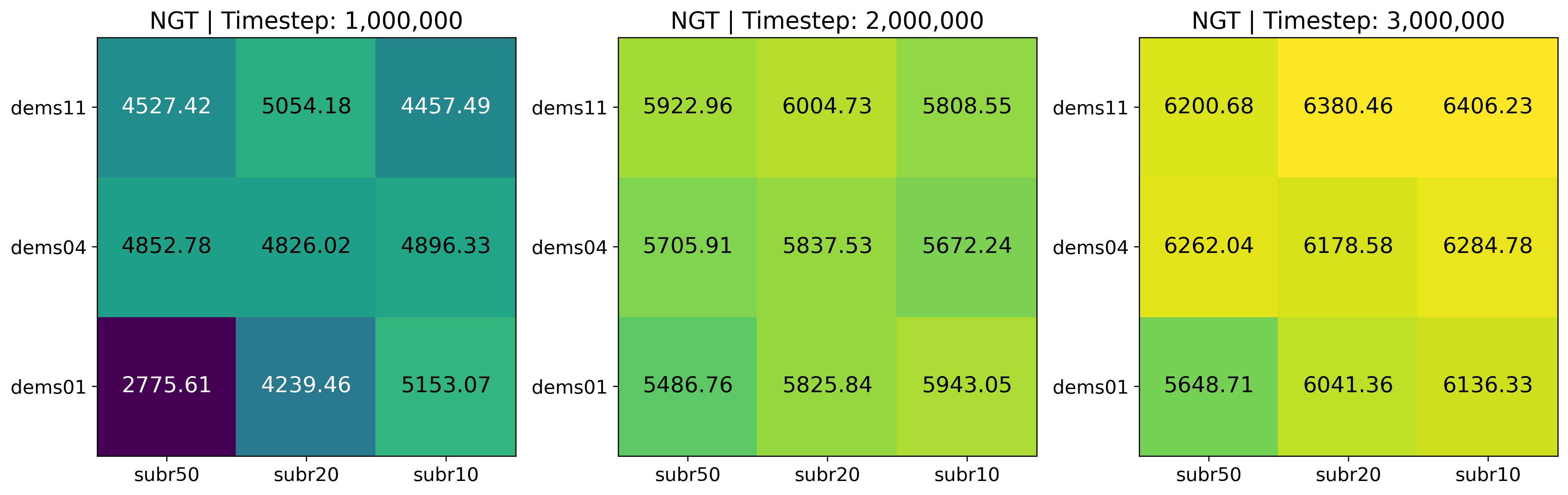}
    \includegraphics[width=0.52\linewidth]{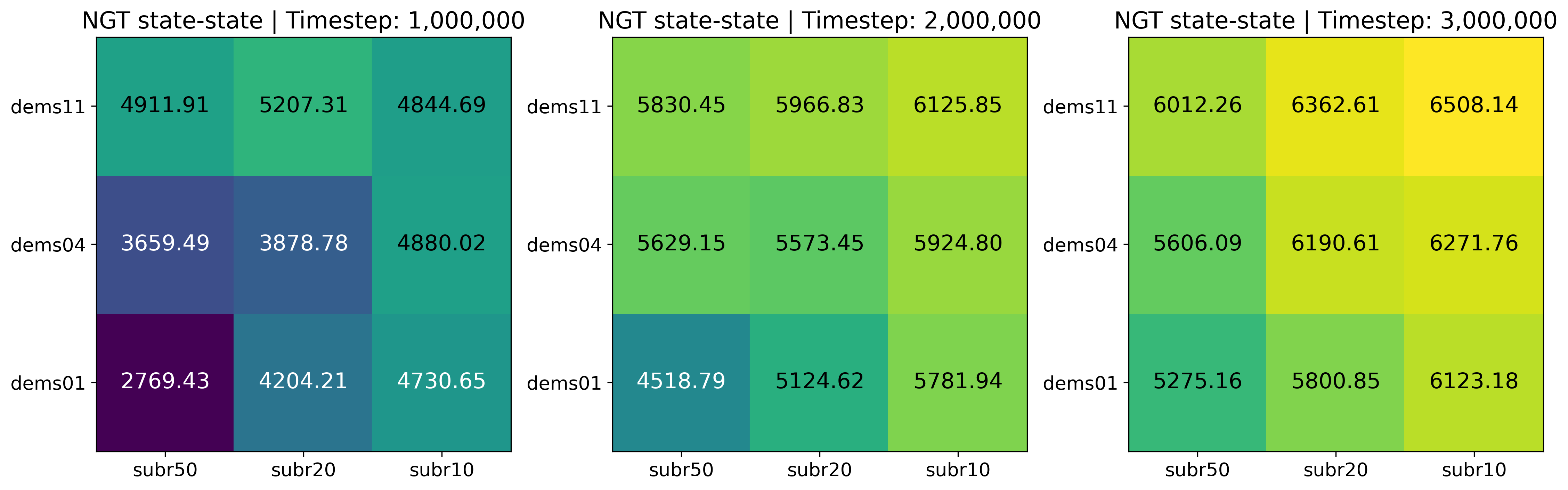}
    \caption{%
NGT's unnormalized performance across varying numbers of demonstrations and subsampling rates,
in the \textit{state-action} (first row) and \textit{state-state} setting (second row), in \texttt{Humanoid-v4}.
}
    \label{fig:grid}
\end{figure*}

We consider baselines that are either offline, or online off-policy.
BC operates pure supervised learning on expert pairs (offline).
We show two variants: BC with with the same subsampling as the other methods, and BC \emph{without} any%
---we simply call the latter \texttt{BC(1)}.
We find this adds perspective on \emph{how data sparsity and scarcity impact the performance}.
For example, \texttt{BC(1)} performs very well on \texttt{Humanoid-v4} with $11$ demonstrations ($11$K pairs to train on).
So, when expert data is abundant, BC is indeed a good option.
Another method is PWIL \citep{Dadashi2021-nl} which iteratively solve a procedure
aimed at minimizing the $\operatorname{EMD}$ in its \emph{primal} form.
As such, it \textit{computes} a reward; it does not \textit{learn} one.
Next, we group DAC~\citep{Kostrikov2019-jo} and SAM~\citep{Blonde2019-vc} under a common framework.
Both methods train their reward function using a JS-GAN discriminator.
We refer to this combined baseline as \texttt{DAC/SAM} in the plots.
We show two variants:
\texttt{R+} uses $-log(1-D)$ as reward ($\in \mathbb{R}_{+}$),
and \texttt{R} uses $-log(1-D) + log(D)$ ($\in \mathbb{R}$).
\texttt{W-DAC/SAM} turns the JS-GAN discriminator into a WGAN critic \citep{Arjovsky2017-la},
and \texttt{MMD-DAC/SAM} replaces the $\operatorname{EMD}$-maximizing objective of the WGAN critic
with an MMD divergence (RBF, $\sigma \in \{0.1,1,10\}$) \citep{Li2017-il,Xiao2019-jh}.
We replicate RED \citep{Wang2019-pd} by only using the left term of $L(\xi)$ in NGT.
The method is denoted \texttt{RED*}, where the \texttt{*} signifies that we apply an adaptive reward numerics scheme (\textsc{Appendix}~\ref{rewnum}) instead of tuning a temperature parameter per environment like the original RED does.
Finally, DiffAIL relies on a diffusion/de-noising task and
uses the diffusion error in place of the discriminator output in a JS-GAN objective.
Crucially, \emph{only the reward model is diffusion-based}.
Since the authors do not treat a \texttt{Humanoid} in the environment-specific configuration files of their codebase,
we started our search from the hyper-parameters they used for \texttt{Ant}.
Specifically, we use a gradient penalty coefficient of $0.1$ for DiffAIL,
and the recommended $10$ for the other dual adversarial methods.
Note, NGT did not require a gradient penalty regularizer outside spectral normalization,
which is something DAC/SAM could not get away with, as show in \citep{Blonde2020-dh}.
We hypothesize that this effect stems from the greater numerical stability and
smoother learning dynamics enabled by the potential function used in our reward learning design.
Unless stated otherwise, all reported \emph{returns are normalized per task}
such that a score of 0 corresponds to the performance of a random agent, and 1 to that of the expert.
Returns below the random baseline then yield negative scores. 
\textbf{Figures:}
We present the main results in \textsc{Figure}~\ref{fig:main}, where we compare NGT to the baselines.
In \textsc{Figure}~\ref{fig:grid}, we further examine how NGT performs across finer-grained settings,
varying both the number of demonstrations and the subsampling rate, in both the state-action and state-state scenarios.
\textsc{Figures}~\ref{fig:main} and \ref{fig:grid} report results from $720$ and $72$ experiments, respectively.

\textbf{Interpretation of the observed results:}
Overall, \textsc{Figure}~\ref{fig:main}
shows that NGT achieves expert performance across the board and outperforms the baselines.
DiffAIL shows strong performance on \texttt{Humanoid-v4},
seemingly leveraging the high representational power of its diffusion model.
Yet, it struggles in most of the others.
Deeper per-environment tuning might make DiffAIL perform better,
but we did not carry out per-task tuning for any method.
\textsc{Figure}~\ref{fig:grid} shows that NGT exhibits consistent and stable scaling%
---even in the state-state setting.
The figures also show that optimizing an $\operatorname{EMD}$ is not the whole story,
or at least that it is not easy to estimate an $\operatorname{EMD}$ properly,
judging by the difference in results between NGT and WGAN.
Unlike binary classification—which may become trivial early in training and
require strong regularization like gradient penalization%
---the $m$-dimensional prior prediction task scales more gracefully with model capacity.
The histogram loss $\ell_{\operatorname{HLG}}$ has the ability to apply $\sigma$-controlled label smoothing.
In particular, since the pairing loss appears \emph{twice} in the reward loss,
we have the option to use different $\sigma$'s on each side.
To emulate the typical JS-GAN trick of smoothing only the expert-side labels,
we can use a higher $\sigma$ value in the expert-side expectation of $L(\xi)$.
Furthermore, we support the design choices made in NGT
through a series of \textbf{ablation studies} presented in \textsc{Appendix}~\ref{ablations},
and extend our analysis to \textbf{extra environments} in \textsc{Appendix}~\ref{extraenvs}.
\textsc{Figure}~\ref{fig:10seeds} illustrates NGT's performance over 10 seeds,
highlighting a tight variance across runs.
Finally, we report a comparison of baseline \textbf{speeds} in \textsc{Appendix}~\ref{speed}.

\section{Conclusion}
\label{dc}
In this work, we develop Noise-Guided Transport (NGT),
a sample-efficient imitation learning method in the low-data regime, where only a handful of demonstrations are available.
The reward learning objective of NGT builds on prediction from random priors, and optimizes a distance rooted in optimal transport theory.
Moreover,
by leveraging distributional losses,
NGT succeeds in learning to reproduce humanoid gaits, with as few as $20$ transitions, even when actions are unavailable.
It outperforms all baselines, and does not require gradient penalization.
An intriguing direction for future work is to investigate the applicability of this objective
to general generative modeling tasks.
Beyond these technical contributions,
we hope this work helps to re-spark interest in imitation learning under data-limited settings,
enabling progress in applied domains such as biorobotics and healthcare,
where demonstrations are extremely scarce and demand special care and considerations.

\section*{Impact Statement}
This paper introduces a data-efficient imitation learning method for continuous control, with the aim of improving performance in low-data settings. Potential positive impacts include more efficient learning for robotics and other control systems; potential risks are those typical of improved autonomy and should be managed by downstream users. We do not use human data and do not foresee direct negative societal impacts beyond these general considerations.

\section*{Acknowledgements}
This work was partially supported by the Swiss National Science Foundation through the METATHESIS project: ``Modelling pathological gait with machine learning for treatment selection support'' (grant no. 220435).
Most of the experiments were run on the Baobab high-performance computing cluster at the University of Geneva.

\bibliography{main}
\bibliographystyle{icml2026}

\newpage
\appendix
\onecolumn
\section{Algorithm}
\label{algorithm}

In \textsc{Algorithm}~\ref{alg:ngt},
the learned reward takes the triplet $(s_t,a_t,s_{t+1})$ as input to signify that,
depending on the setting,
the reward could be trained to use any of the following input combinations: $(s_t,a_t)$, $(s_t,s_{t+1})$, or $s_t$.

\begin{algorithm}[H]
\caption{Noise-Guided Transport (NGT) Algorithm (as presented in this work)}
\label{alg:ngt}
\begin{algorithmic}[1]
\STATE Initialize parameters of policy network $\pi_\theta$, Q-networks $Q_{\omega_1}$, $Q_{\omega_2}$, target Q-network parameters $\bar{\omega}_1 \leftarrow \omega_1$, $\bar{\omega}_2 \leftarrow \omega_2$, and reward model $r_{\xi}$
\STATE Initialize temperature parameter $\alpha$ and target entropy $\mathcal{H}$
\STATE Initialize replay buffer $\mathcal{D}$ and expert demonstration dataset $\mathcal{E}$
\FOR{each iteration}
    \FOR{each environment step}
        \STATE Sample action $a_t \sim \pi_\theta(a_t|s_t)$
        \STATE Execute $a_t$ in environment, observe $s_{t+1}$
        \STATE Store $(s_t, a_t, s_{t+1})$ in $\mathcal{D}$  \STATE \texttt{// not storing rewards}
    \ENDFOR
    \FOR{each gradient step}
        \STATE \texttt{// Update reward model}
        \STATE Sample a minibatch of transitions $(s_t, a_t, s_{t+1})$ $\mathcal{B}_\mathcal{E}$ from expert dataset $\mathcal{E}$
        \STATE Sample a minibatch of transitions $(s_t, a_t, s_{t+1})$ $\mathcal{B}_\mathcal{D}$ from replay buffer $\mathcal{D}$
        \STATE Update reward model parameters $\xi$ by minimizing:
        \[
        L(\xi) = \frac{1}{|\mathcal{B}_\mathcal{E}|} \sum_{t \in \mathcal{B}_\mathcal{E}} h_{\xi}(s_t, a_t, s_{t+1}) - \frac{1}{|\mathcal{B}_\mathcal{D}|} \sum_{t \in \mathcal{B}_\mathcal{D}} h_{\xi}(s_t, a_t, s_{t+1})
        \]
        \STATE \texttt{// Update actor-critic}
        \STATE Sample a minibatch of transitions $(s_t, a_t, s_{t+1})$ from $\mathcal{D}$
        \STATE Compute target Q-value using reward model $r_{\xi}$:
        \[
        y_t = r_{\xi}(s_t, a_t, s_{t+1}) + \gamma \mathbb{E}_{a_{t+1} \sim \pi_\theta} \left[ \min_{i=1,2} Q_{\bar{\omega}_i}(s_{t+1}, a_{t+1}) - \alpha \log \pi_\theta(a_{t+1}|s_{t+1}) \right]
        \]
        \STATE Update Q-function parameters $\omega_i$ by minimizing:
        \[
        L(\omega_i) = \frac{1}{|\mathcal{B}_\mathcal{D}|} \sum_{t \in \mathcal{B}_\mathcal{D}} \left( Q_{\omega_i}(s_t, a_t) - y_t \right)^2 \quad \text{for } i = 1, 2
        \]

        \STATE Update policy parameters $\theta$ by minimizing:
        \[
        L(\theta) = \frac{1}{|\mathcal{B}_\mathcal{D}|} \sum_{t \in \mathcal{B}_\mathcal{D}} \mathbb{E}_{a_t \sim \pi_\theta} \left[ \alpha \log \pi_\theta(a_t|s_t) - \min_{i=1,2} Q_{\omega_i}(s_t, a_t) \right]
        \]

        \STATE Adjust temperature $\alpha$ (optional) by minimizing:
        \[
        L(\alpha) = -\frac{1}{|\mathcal{B}_\mathcal{D}|} \sum_{t \in \mathcal{B}_\mathcal{D}} \alpha \left( \log \pi_\theta(a_t|s_t) + \mathcal{H} \right)
        \]
        \STATE Update target Q-network parameters:
        \[
        \bar{\omega}_i \leftarrow \tau \omega_i + (1 - \tau) \bar{\omega}_i \quad \text{for } i = 1, 2
        \]
    \ENDFOR
\ENDFOR
\end{algorithmic}
\end{algorithm}

\section{HL-Gaussian Loss: Mechanism and How We Extend It}
\label{apdx:hlg:txt0}

In essence, $\ell_{\operatorname{HLG}}$ maps the scalar target onto $N$ bins (spread evenly across $[a,b]$) using a transformation that assigns the highest probability mass to the bin containing the scalar, while distributing the remaining mass to neighboring bins in the shape of a bell curve. This redistribution of mass to adjacent locations is akin to target smoothing.
The model being trained with this loss predicts one value per bin, i.e.\ a $N$-dimensional output.
Those logits are transformed into probabilities with a softmax layer.
The two vectors of size $N$ are then compared using cross-entropy.
To compare $f_{\xi}$ to $f^{\dagger}_{\xi}$, we therefore introduce \emph{asymmetry} in their architecture.
The random priors are still in $\mathbb{R}^m$, but the predictor returns an embedding in $\mathbb{R}^{m \times N}$ (column vector).
It is then rearranged into a $N \times m$ matrix ($\in \mathcal{M}_{m,N}(\mathbb{R})$).
After softmax-ing row-wise, each now of this matrix is a vector whose $N$ elements are interpreted as predicted probabilities over bins.
On the target side, the $m$ scalar random priors returned by $f^{\dagger}_{\xi}$ are each transformed into a probability vector spanning $N$ bins.
In effect, we now have a $\mathcal{M}_{m,N}(\mathbb{R})$ matrix from $f^{\dagger}_{\xi}$ and from $f^{\dagger}_{\xi}$.
Finally, the matrices are compared row-wise using $N$-bin cross-entropy.
Our implementation of the $\ell_{\operatorname{HLG}}$ loss is given in \textsc{Snippet}~\ref{apdx:hlg:code}, in \textsc{Appendix}~\ref{apdx:hlg:txt},

\section{HL-Gaussian Loss: Code Snippet}
\label{apdx:hlg:txt}

In this section, we provide a PyTorch~\citep{Paszke2019-zf} code snippet of our augmented implementation of the HL-Gaussian loss~\citep{Imani2018-fx}.  
Our code builds on the reference snippet shared in the appendix of \citet{Farebrother2024-hu}, with several modifications.  
One change that significantly improved numerical stability was the addition of a small constant \( \epsilon = 10^{-6} \) to the denominator in the \texttt{transform\_to\_probs} function, as shown in \textsc{Snippet}~\ref{apdx:hlg:code}.
Matrix reshaping into $\mathcal{M}_{m,N}(\mathbb{R})$ is carried out using the \texttt{einops} library.

{\scriptsize
\begin{lstlisting}[language=Python, caption=HL-Gaussian loss, label=apdx:hlg:code]
from einops import rearrange
import torch
from torch import nn
from torch.nn import functional as ff

class HLGaussLoss(nn.Module):

    def __init__(self,
                 *,
                 min_value: float,
                 max_value: float,
                 num_bins: int,
                 sigma: float,
                 device: torch.device,
                 reduction: str = "none"):
        super().__init__()
        self.min_value = min_value
        self.max_value = max_value
        self.num_bins = num_bins
        self.sigma = sigma
        self.device = device
        self.reduction = reduction
        self.support = torch.linspace(
            min_value, max_value, num_bins + 1, dtype=torch.float, device=self.device)
        self.sqrt_of_two = torch.sqrt(torch.tensor(2.0, device=self.device))

    def forward(self, logits: torch.Tensor, target: torch.Tensor) -> torch.Tensor:
        logits = rearrange(logits, "b (c d) -> b c d", c=self.num_bins)
        target_probs = self.transform_to_probs(target)
        target_probs = rearrange(target_probs, "b d c -> b c d", c=self.num_bins)
        return ff.cross_entropy(logits, target_probs, reduction=self.reduction)

    def transform_to_probs(self, target: torch.Tensor) -> torch.Tensor:
        operand1 = self.support - target.unsqueeze(-1)
        operand2 = self.sqrt_of_two * self.sigma
        operand = operand1 / operand2
        cdf_evals = torch.special.erf(operand)
        z = cdf_evals[..., -1] - cdf_evals[..., 0]
        bin_probs = cdf_evals[..., 1:] - cdf_evals[..., :-1]
        return bin_probs / (z + 1e-6).unsqueeze(-1)

    def transform_from_probs(self, probs: torch.Tensor) -> torch.Tensor:
        centers = (self.support[:-1] + self.support[1:]) / 2
        return torch.sum(probs * centers, dim=-1)
\end{lstlisting}
}

\section{Related Works (Expansion)}
\label{apdx:rw}

In this section, we expand upon the related work discussed in \textsc{Section}~\ref{rw}.

The structure of the loss we train our reward with, $\ell(f(x),f^{\dagger}(x))$, echoes the loss used for contrastive learning in self-supervised learning, $\ell(f(x),f(y))$ (\textit{e.g.}, in SimCLR \citep{Chen2020-xh}).
However, unlike contrastive learning, where $y$ represents an augmented or semantically similar version of $x$, our formulation leverages $f^{\dagger}(x)$ as a prior/reference signal.

Concentration bounds in optimal transport (OT) provide guarantees on the convergence rate of empirical measures in Wasserstein distance (also called earth-move distance, $\operatorname{EMD}$).
\citep{Fournier2015-lu} establishes explicit bounds on the Wasserstein distance, with rates that depend on the dimensionality of the space.
Their results are particularly sharp in the one-dimensional case, such as traditional $\operatorname{EMD}$.
\citep{Villani2009-hu} serves as a comprehensive reference for OT theory, including Wasserstein concentration bounds.
While Villani does not focus specifically on empirical concentration inequalities, key theoretical tools such as the Lipschitz constant and McDiarmid's inequality appear in derivations, like in ours.
More recently, \citep{Weed2019-cv} provides a detailed analysis of finite-sample convergence rates, refining previous results and offering insights into high-dimensional settings.
These add perspective to the concentration guarantees we derived in this work.

The influence of the Lipschitz constant on the reward function has been investigated in depth in \citep{Blonde2020-dh}.
The authors argue, using DAC \citep{Kostrikov2019-jo} and SAM \citep{Blonde2019-vc} as baselines, that gradient penalization \citep{Gulrajani2017-mr} is necessary for learning in the off-policy setting, as spectral normalization \citep{Miyato2018-wc} alone proves insufficient.
In contrast, NGT (this work) succeeds with \emph{only} spectral normalization.
How to enforce Lipschitz-continuity in neural networks and how it impact their performance has also been tackled in \citep{Khromov2024-mh}.
Earlier in \textsc{Section}~\ref{method}, we made a connection between RED \citep{Wang2019-pd} and one-class classification (positive-unlabeled, PU learning) as each posits only positive signal is available.
PU learning was used in adversarial IL in \citep{Hou2018-ih} but resulted in subpar performance \citep{Blonde2020-dh}.

DreamerV3 \citep{Hafner2023-wk} and TD-MPC2 \citep{Hansen2024-ld} are other works that incorporate classification losses for value learning in RL, though not as their primary focus.
In contrast, \citet{Farebrother2024-hu} explicitly centers on this idea, arguing that using classification losses for value learning enables RL to benefit from scale.
Our findings support and extend their observations, in that we show that classification losses can also play a crucial role in reward learning, in an imitation learning context.

Reinforcement learning from human feedback (RLHF, \citep{Christiano2017-vn}) has seen a monumental resurgence in recent years with the rise of conversational agents built from large language models (LLMs).
RLHF is used in LLM post-training to align the agent with human incentives through a reward model.
\textbf{Modern RLHF techniques align with our reward learning logic}: perform gradient descent on preferred behaviors and gradient ascent on undesired ones, \textbf{as in Direct Preference Optimization (DPO)} \citep{Rafailov2023-jy}.
At the intersection of reward Lipschitz continuity and LLMs, WARM \citep{Rame2024-xa} highlights the critical role of strict reward regularity in ensuring the effectiveness of a reward model for LLM post-training.
Their findings closely align with those of \citep{Blonde2020-dh}, which we previously discussed.
Finally, a recent trend in the field views RLHF through the lens of inverse reinforcement learning (IRL), framing the reward model as an implicit representation of human preferences \citep{Wulfmeier2024-hd, Sun2024-tc}.
\textbf{``RLHF as \emph{adversarial} IRL'' may soon emerge as a key direction in the field.}

Outside the scope of RL, \citet{Lemos2023-dn} created a technique based on random points to test the accuracy of posterior estimators. The technique is intended for evaluation rather than model training.

\section{WGAN}
\label{wgan}

In the WGAN \citep{Arjovsky2017-la} formulation, the potential function corresponds directly to the critic, and the generator is trained by gradient descent on the learned potential.
By contrast, we train our actor-critic architecture via policy gradient, with a reward constructed from the learned potential $h_{\xi}$.
Although both NGT and WGAN optimize an $\operatorname{EMD}$, they rely on fundamentally different potential functions.
Crucially, the behavior of these potentials can significantly affect learning dynamics and training stability, as demonstrated by our experimental results in \textsc{Section}~\ref{exps}, where the \texttt{W-DAC/SAM} baseline implements the WGAN potential.
Notably, while the WGAN potential (i.e.\ the critic) takes value in $\mathbb{R}$, our potential $h_{\xi}$ returns values in $\mathbb{R}_{+}$.
What's more, while the WGAN critic is unconstrained over $\mathbb{R}$, the values returned by our predictor network $f_{\xi}$ are \textbf{implicitly anchored} by those of the prior network $f^{\dagger}_{\xi}$, which prevents $h_{\xi}$ from attaining excessively large values in $L(\xi)$.

\section{Concentration of Empirical Objective: Theoretical Results and Proofs}
\label{theory:concentration}

The loss $L(\xi)$ derived in \textsc{Section}~\ref{setup} to learn a robust reward signal is defined in \textsc{Eq}~\ref{rewloss} as:
\[
L(\xi) =%
\mathbb{E}_{x \sim P_{\operatorname{expert}}}\big[h_{\xi}(x)\big] -%
\mathbb{E}_{x \sim P_{\operatorname{agent}}}\big[h_{\xi}(x)\big]
\]
where $h_\xi$ is $1$-Lipschitz \textit{w.r.t.} a ground metric over the input space $\mathbb{X}$: $d(x,x')$, $\forall x,x' \in \mathbb{X}$.
To support the reliability of this objective, \textbf{we set out to derive a concentration bound for its empirical estimate} $\hat{L}(\xi)$, computed from finite samples drawn from $P_{\operatorname{expert}}$ and $P_{\operatorname{agent}}$.
The diameter,
for the input space $\mathbb{X}$ and ground metric $d$, is defined as $\operatorname{diam}(\mathbb{X}) \coloneqq \sup_{x,x' \in \mathbb{X}}{d(x,x')}$.
In what follows,
We omit the ``$\xi$'' subscripts to lighten the notations ($h_{\xi} \to h$).
Also, we consider the $h$ functions that are $\Lambda$-Lipschitz: $h \in H^{\Lambda}$ ($H^{\Lambda}_{\xi} \to H^{\Lambda}$).
We treat the case $\Lambda=1$ in a corollary.
Finally: $L(\xi) \to L$.

\begin{assumption}
$\Lambda > 0$ and $\operatorname{diam}(\mathbb{X}) < +\infty$.
\end{assumption}

\begin{theorem}[Concentration bound for the reward loss]
\label{th:rewloss}
Let $X^e=\{x_1^e, \ldots, x_n^e\}$ and $X^a=\{x_1^a, \ldots, x_n^a\}$ be sets of $n$ independent samples drawn from $P_{\operatorname{expert}}$ and $P_{\operatorname{agent}}$. Let $h \in H_{\Lambda}$, and let the empirical loss $\hat{L}$ be defined as:
\begin{equation}
\label{empirewloss}
\hat{L} \coloneqq%
\frac{1}{n}\sum_{i=1}^{n}h(x_i^e) -%
\frac{1}{n}\sum_{j=1}^{n}h(x_j^a)
\end{equation}
Then, the \textbf{deviation} of the empirical loss $\hat{L}$ (\textsc{Eq}~\ref{empirewloss}) from its expected value $L$ (\textsc{Eq}~\ref{rewloss}) verifies:
\begin{equation}
\mathbb{P}\big(|\hat{L} - L| \geq \epsilon\big)
\leq \exp{\bigg(-\frac{\epsilon^2 n}{\Lambda^2 \operatorname{diam}(\mathbb{X})^2}\bigg)}
\end{equation}
\end{theorem}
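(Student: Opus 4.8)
The plan is to recognize this as a textbook application of McDiarmid's bounded-differences inequality, treating $\hat{L}$ as a function of the $2n$ mutually independent samples $x_1^e,\ldots,x_n^e,x_1^a,\ldots,x_n^a$. First I would confirm that $L$ is indeed the mean of $\hat{L}$: because the $x_i^e$ are i.i.d.\ from $P_{\operatorname{expert}}$ and the $x_j^a$ are i.i.d.\ from $P_{\operatorname{agent}}$, linearity of expectation gives $\mathbb{E}[\hat{L}] = \mathbb{E}_{x\sim P_{\operatorname{expert}}}[h(x)] - \mathbb{E}_{x\sim P_{\operatorname{agent}}}[h(x)] = L$, so that $|\hat{L}-L|$ is a genuine deviation from the expectation.

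Next I would establish the bounded-differences constants, one per coordinate. Fixing every argument but a single expert sample $x_i^e$ and replacing it by an arbitrary $\tilde{x}_i^e \in \mathbb{X}$ changes $\hat{L}$ by exactly $\tfrac{1}{n}\big(h(\tilde{x}_i^e)-h(x_i^e)\big)$; using that $h$ is $\Lambda$-Lipschitz and then the diameter,
\[
\tfrac{1}{n}\big|h(\tilde{x}_i^e)-h(x_i^e)\big| \;\leq\; \tfrac{\Lambda}{n}\,d(\tilde{x}_i^e,x_i^e) \;\leq\; \frac{\Lambda\,\operatorname{diam}(\mathbb{X})}{n}.
\]
The same argument applied to any agent coordinate $x_j^a$ yields the identical constant (the minus sign is irrelevant under the absolute value), so all $2n$ bounded-differences constants equal $c = \Lambda\,\operatorname{diam}(\mathbb{X})/n$.

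Then I would assemble the constants and invoke the inequality. The sum of squares is $\sum_{k=1}^{2n} c^2 = 2n\big(\Lambda\,\operatorname{diam}(\mathbb{X})/n\big)^2 = 2\Lambda^2\,\operatorname{diam}(\mathbb{X})^2/n$. Plugging this into the one-sided McDiarmid bound $\mathbb{P}(\hat{L}-L \geq \epsilon) \leq \exp\!\big(-2\epsilon^2/\sum_k c^2\big)$ and simplifying the exponent to $\epsilon^2 n/(\Lambda^2\operatorname{diam}(\mathbb{X})^2)$ reproduces the claimed bound.

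I do not anticipate a conceptual obstacle, since the structure is standard; the only point needing care is the constant in front of the exponential. A strict two-sided bound on $|\hat{L}-L|$ obtained by a union bound over the two one-sided tails would carry an extra factor of $2$, whereas the statement as written matches the one-sided constant exactly---so I would either present the one-sided deviation or note that the factor of $2$ can be absorbed, leaving the appendix to fix the convention. The one genuine assumption being used is $\operatorname{diam}(\mathbb{X}) < +\infty$, which is precisely what renders each per-coordinate change finite; without a bounded input domain (and a finite Lipschitz constant) the bound degenerates.
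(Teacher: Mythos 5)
Your proposal is correct and follows essentially the same route as the paper's proof: McDiarmid's bounded-differences inequality over the $2n$ independent samples with per-coordinate constant $\Lambda\,\operatorname{diam}(\mathbb{X})/n$, sum of squares $2\Lambda^2\operatorname{diam}(\mathbb{X})^2/n$, and the resulting exponent $\epsilon^2 n/(\Lambda^2\operatorname{diam}(\mathbb{X})^2)$. Your observation about the factor of $2$ is well taken---the paper's own proof applies the one-sided constant directly to the two-sided event $|\hat{L}-L|\geq\epsilon$ without comment, so your version is, if anything, more careful on that point.
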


The proof of \textsc{Th}~\ref{th:rewloss} relies on McDiarmid's method of bounded differences \citep{McDiarmid1989-hv}.

It proceeds as follows.

\begin{proof}
First, we conduct a sensitivity analysis of $\hat{L}$ by evaluating the \textbf{maximum change} in $\hat{L}$ when a single sample is substituted.
Starting with the first term of $\hat{L}$, we see that a replacement $x_i^e \in X^e \to x_i^{e'}$, without loss of generality causes the change:
\begin{equation}
\left| \frac{h(x_i^e)}{n} - \frac{h(x_i^{e'})}{n} \right|
= \frac{1}{n} \left| h(x_i^e) - h(x_i^{e'}) \right|
\end{equation}
Since this applies for any replacement in the first term of $\hat{L}$, we can try to upper bound the term above with a bound that does not depend on the indices of the samples, and that entity would then \emph{bound all the differences} in the term.
\begin{equation}
\frac{1}{n} \left| h(x_i^e) - h(x_i^{e'}) \right|
\leq \frac{\Lambda}{n} d(x_i^e,x_i^{e'})
\leq \frac{\Lambda}{n} \operatorname{diam}(\mathbb{X})
\end{equation}
The first transition is due to $h$ being $\Lambda$-Lipschitz continuous by assumption.
The second applies the definition of diameter.
By symmetry, the sensitivity is the same for every replacement in the second term of $\hat{L}$.
Due to all the differences being bounded, we can use McDiarmid's inequality \citep{McDiarmid1989-hv}.
To compute the bound, we need to compute the sum of squares of the bounds of the individual changes.
Since there are $2n$ replacements and that we upper-bounded every replacement by an index-independent value $(\Lambda/n) \operatorname{diam}(\mathbb{X})$, the total sensitivity to insert in McDiarmid's bound is:
\begin{equation}
S \coloneqq
2n \Big(\frac{\Lambda}{n} \operatorname{diam}(\mathbb{X})\Big)^2 =
2 \frac{\Lambda^2}{n} \operatorname{diam}(\mathbb{X})^2
\end{equation}
We conclude by using the inequality with the calculated $S$:
\begin{equation}
\mathbb{P}\big(|\hat{L} - L| \geq \epsilon\big)
\leq \exp{\bigg(-\frac{2 \epsilon^2}{S}\bigg)}
\end{equation}
The reduction of the operand yields the result in \textsc{Th}~\ref{th:rewloss}.
\end{proof}

A PAC-style bound (probably approximately correct) can easily be derived from \textsc{Th}~\ref{th:rewloss} by equaling the bound to a $\delta$ and reducing.
We can also derive a \textbf{corollary} for the case ``$h \in H^{1}$'' $\Lambda=1$.

\begin{corollary}[For $h \in H^{1}$]
\begin{equation}
\mathbb{P}\big(|\hat{L} - L| \geq \epsilon\big)
\leq \exp{\bigg(-\frac{\epsilon^2 n}{\operatorname{diam}(\mathbb{X})^2}\bigg)}
\end{equation}
\end{corollary}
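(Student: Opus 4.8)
The plan is to obtain this corollary as a direct specialization of \textsc{Th}~\ref{th:rewloss}, which has already been established for an arbitrary Lipschitz constant $\Lambda > 0$. The key observation is that the hypothesis $h \in H^{1}$ is precisely the statement that $h$ is $1$-Lipschitz with respect to the ground metric $d$, i.e.\ that the inequality $|h(x) - h(x')| \leq \Lambda\, d(x,x')$ holds with $\Lambda = 1$. Consequently, the corollary is nothing more than the $\Lambda = 1$ instance of the parent theorem, and no fresh argument is required.

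First I would confirm that every hypothesis of \textsc{Th}~\ref{th:rewloss} transfers to this setting with $\Lambda = 1$: the sample sets $X^e$ and $X^a$ consist of $n$ independent draws from $P_{\operatorname{expert}}$ and $P_{\operatorname{agent}}$ respectively, the empirical loss $\hat{L}$ is defined exactly as in \textsc{Eq}~\ref{empirewloss}, and $h$ now belongs to $H^{1}$, the subclass obtained by fixing $\Lambda = 1$ in the Lipschitz condition. Since these assumptions match those of the general theorem verbatim, its concentration bound applies without modification.

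Then I would substitute $\Lambda = 1$ into the established bound:
\begin{equation}
\exp\bigg(-\frac{\epsilon^2 n}{1^2 \cdot \operatorname{diam}(\mathbb{X})^2}\bigg)
= \exp\bigg(-\frac{\epsilon^2 n}{\operatorname{diam}(\mathbb{X})^2}\bigg),
\end{equation}
which is exactly the inequality claimed in the corollary. There is essentially no obstacle here: all of the technical content---the bounded-differences sensitivity analysis and the invocation of McDiarmid's inequality---lives in the proof of \textsc{Th}~\ref{th:rewloss}, and the corollary merely reads off that result at the specific value of $\Lambda$ corresponding to our design choice of restricting the search to $1$-Lipschitz potentials (\textsc{Section}~\ref{theoground}). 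The only point worth stating explicitly is the identification of $H^{1}$ with the $\Lambda = 1$ subclass of $H^{\Lambda}$, which is immediate from the definitions.
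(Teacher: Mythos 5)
Your proposal is correct and matches the paper's own argument exactly: the paper likewise states that the corollary is immediate from \textsc{Th}~\ref{th:rewloss} by setting $\Lambda = 1$. Your added care in checking that the hypotheses transfer and identifying $H^{1}$ as the $\Lambda = 1$ subclass is sound but not a departure from the paper's route.
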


The proof is immediate from \textsc{Th}~\ref{th:rewloss} by setting $\Lambda = 1$.

\section{Potential Function Lipschitz Continuity: Proof}
\label{theory:proofhlip}

We here provide a proof for the theorem \textsc{Th}~\ref{th:hlip}, presented in \textsc{Section}~\ref{execution} without proof.

This theorem characterizes the Lipschitz constant of the potential function $h_{\xi}$ in terms of the Lipschitz constants of the individual functions that composes it: $f_{\xi}$, $f^{\dagger}_{\xi}$, and $\ell$.
\begin{proof}
Let $\Lambda(\cdot)$ denote the Lipschitz constant of a given function.
We aim to bound the Lipschitz constant of the composite function $h_{\xi}$.
To do so, we derive an upper bound on the deviation of $h_{\xi}$ in terms of its constituent functions.
By the properties of Lipschitz continuity under composition:
\begin{align}
\big|h_{\xi}(x_1) - h_{\xi}(x_2)\big|
& = \big|\ell\big(f_{\xi}(x_1),f^{\dagger}_{\xi}(x_1)\big) - \ell\big(f_{\xi}(x_2),f^{\dagger}_{\xi}(x_2)\big)\big| \notag \\
&\quad \leq \Lambda(\ell) \, 
\bigg(\Big\|\big(f_{\xi}(x_1), f^{\dagger}_{\xi}(x_1)\big) - \big(f_{\xi}(x_2), f^{\dagger}_{\xi}(x_2)\big)\Big\|\bigg) \notag \\
&\quad \leq \Lambda(\ell) \, \Big(\big\|f_{\xi}(x_1) - f_{\xi}(x_2)\big\|
+ \big\|f^{\dagger}_{\xi}(x_1) - f^{\dagger}_{\xi}(x_2)\big\|\Big) \notag \\
&\quad \leq \Lambda(\ell) \, \Big(\Lambda(f_{\xi}) \, d(x_1,x_2) + \Lambda(f^{\dagger}_{\xi}) \, d(x_1,x_2)\Big) \notag \\
&\quad \leq \Lambda(\ell) \big(\Lambda(f_{\xi}) + \Lambda(f^{\dagger}_{\xi})\big) \, d(x_1,x_2)
\end{align}
$\forall x_1, x_2 \in \mathbb{X}$. Therefore, $h_{\xi}$ is Lipschitz continuous with constant:
\begin{equation}
\Lambda(\ell) \big(\Lambda(f_{\xi}) + \Lambda(f^{\dagger}_{\xi})\big)
\end{equation}
with respect to the ground metric $d$ over $\mathbb{X}$ with, $\forall x_1, x_2 \in \mathbb{X}$.
\end{proof}

\section{HL-Gaussian Loss Lipschitz Continuity: Theoretical Results and Proofs}
\label{theory:hlg}

In this appendix, we present theoretical results and corresponding proofs demonstrating and characterizing the Lipschitz continuity of the HL-Gaussian loss function \citep{Imani2018-fx}, introduced in \textsc{Section}~\ref{elldiscuss}.

\begin{definition}[Groundwork]
\label{def}
Let $\{s_0, \ldots, s_N\}$ be a partition of the interval $[a,b]$ into $N$ bins.
As such, $s_0 = a$, $s_N = b$, and each bin has width $\Delta s \coloneqq s_{i+1} - s_i$, $\forall i \in [0, N-1] \cap \mathbb{N}$.
The general definition of probability for bin $i$, under the Normal distribution, is, $\forall t \in [a,b]$:
\begin{equation}
p_i(t) \coloneqq%
\displaystyle\frac{%
\Phi_0\big(\frac{s_{i+1} - t}{\sigma}\big) - \Phi_0\big(\frac{s_i - t}{\sigma}\big)
}{%
\Phi_0\big(\frac{b - t}{\sigma}\big) - \Phi_0\big(\frac{a - t}{\sigma}\big)
}
\end{equation}
where $\Phi_0$ is the CDF of the standard normal distribution.
The denominator ensures that if there were probability mass of the $t$-centered Gaussian to be put outside $[a,b]$ (then \emph{truncated}), the bins would be rebalanced by being uniformly attributed the extra mass needed for the $p_i$'s to sum up to $1$.
$\Phi_0$ can be expressed with the \emph{special function} $\operatorname{erf}$.
Hence:
\begin{equation}
p_i(t) =%
\displaystyle\frac{%
\operatorname{erf}\big(\frac{s_{i+1} - t}{\sqrt{2}\sigma}\big) - \operatorname{erf}\big(\frac{s_i - t}{\sqrt{2}\sigma}\big)
}{%
\operatorname{erf}\big(\frac{b - t}{\sqrt{2}\sigma}\big) - \operatorname{erf}\big(\frac{a - t}{\sqrt{2}\sigma}\big)
}
\end{equation}
which is how the \emph{target transformation} is operated for a scalar $t \in [a,b]$ in the code snippet \textsc{Code}~\ref{apdx:hlg:code} in \textsc{Appendix}~\ref{apdx:hlg:txt}.

Let $\tilde{x} \coloneqq (x_0, \ldots, x_{N_1}) \in \mathbb{R}^N$ be a vector of \emph{logits} (typically the output of neural net).
Following, \textsc{Code}~\ref{apdx:hlg:code} in \textsc{Appendix}~\ref{apdx:hlg:txt}, we go from \emph{predicted logits} to \emph{predicted probabilities} with a \textbf{softmax} over the $\mathbb{R}^N$ vector (also summing up to one by construction).
The predicted distribution is denoted by $q$. The per-bin probability mass is:
\begin{equation}
q_i(\tilde{x}) \coloneqq%
\frac{\exp(x_i)}{\sum_{j=0}^{N-1} \exp(x_j)}
\end{equation}
$\forall i \in [0, N-1] \cap \mathbb{N}$.
Finally, we define $\ell_{\operatorname{HLG}}$ as the \textbf{cross-entropy} between the transformed target distribution $p(t)$ and the predicted distribution $q(\tilde{x})$.
It is the bin-wise sum:
\begin{equation}
\ell_{\operatorname{HLG}}(\tilde{x}, t) \coloneqq%
- \sum_{0}^{N-1} p_i(t)\log{(q_i(\tilde{x}))}
\end{equation}
$\forall i \in [0, N-1] \cap \mathbb{N}$ and $\forall t \in [a,b]$.
\end{definition}

\begin{theorem}[Lipschitz constant of $\ell_{\operatorname{HLG}}$ ($p_{\operatorname{max}}$ version)]
\label{th:pmax}
For any vector of logits over bins $\tilde{x} \in \mathbb{R}^N$, and $\forall t \in [a,b]$, the loss $\ell_{\operatorname{HLG}}$ is $\Lambda$-Lipschitz continuous \textit{w.r.t.} $\tilde{x}$, with:
\begin{equation}
\Lambda \leq \sqrt{1 + (N-1)p_{\operatorname{max}}^2}
\end{equation}
where $p_{\operatorname{max}}$ is the maximal probability mass reachable by $p(t)$ on a bin of its support $[a,b]$.
It is achieved on the bin $k$ where the $t$ value falls in ($p_k(t) = p_{\operatorname{max}}$), and upper bounds the mass of any other bin: $\forall i \neq k, p_i(t) \leq p_{\operatorname{max}}$.
\end{theorem}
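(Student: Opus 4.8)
The plan is to reduce the Lipschitz question to a bound on the gradient of $\ell_{\operatorname{HLG}}$ with respect to the logits $\tilde{x}$, and then to maximize the resulting expression over all predicted distributions. First I would exploit the closed form of the softmax–cross-entropy gradient. Writing $q(\tilde{x})$ for the softmax of $\tilde{x}$ and $p(t)$ for the fixed transformed target, and using $\log q_i(\tilde{x}) = x_i - \log\sum_j e^{x_j}$ together with $\sum_i p_i(t) = 1$, the loss rewrites as $\ell_{\operatorname{HLG}}(\tilde{x},t) = -\sum_i p_i(t)\,x_i + \log\sum_j e^{x_j}$, whose partial derivative in $x_k$ is $q_k(\tilde{x}) - p_k(t)$. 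Hence $\nabla_{\tilde{x}}\ell_{\operatorname{HLG}} = q(\tilde{x}) - p(t)$. Since the domain $\mathbb{R}^N$ is convex, the tightest Lipschitz constant with respect to the Euclidean norm equals $\sup_{\tilde{x}} \|\nabla_{\tilde{x}}\ell_{\operatorname{HLG}}\|_2 = \sup_{\tilde{x}} \|q(\tilde{x}) - p(t)\|_2$.

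Next I would convert the supremum over logits into a maximization over predicted distributions. As $\tilde{x}$ ranges over $\mathbb{R}^N$, $q(\tilde{x})$ ranges over the (open) probability simplex, so by continuity the supremum equals the maximum of $q \mapsto \|q - p(t)\|_2^2$ over the closed simplex $\Delta^{N-1}$. The key observation is that this map is convex in $q$ (its Hessian is $2I \succeq 0$), so its maximum over the compact convex polytope $\Delta^{N-1}$ is attained at an extreme point, i.e.\ at a one-hot vector $e_k$. Evaluating there gives $\|e_k - p(t)\|_2^2 = (1 - p_k(t))^2 + \sum_{i \neq k} p_i(t)^2$.

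Finally I would bound this vertex value in the worst case using only the two facts $0 \le p_i(t) \le 1$ and $p_i(t) \le p_{\operatorname{max}}$. The first term obeys $(1 - p_k(t))^2 \le 1$, while the remaining $N-1$ terms obey $\sum_{i \neq k} p_i(t)^2 \le (N-1)\,p_{\operatorname{max}}^2$. Summing and taking the maximum over $k$ yields $\sup_{\tilde{x}} \|q(\tilde{x}) - p(t)\|_2^2 \le 1 + (N-1)\,p_{\operatorname{max}}^2$, and a square root gives the claimed $\Lambda \le \sqrt{1 + (N-1)\,p_{\operatorname{max}}^2}$.

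I expect the main obstacle to be the reduction step rather than the final estimate: one must argue carefully that the worst case over the logits corresponds to a fully concentrated predicted distribution. The convexity-of-the-squared-distance argument (the maximum of a convex function on a polytope is reached at a vertex) is what makes this rigorous and sidesteps a messier Lagrange-multiplier computation, while the open-versus-closed simplex issue is a minor continuity and compactness technicality. The per-coordinate bounds in the last step are deliberately loose---this is precisely the worst-case character the statement advertises---so no sharpness is claimed there; the definition of $p_{\operatorname{max}}$ as the mass of the bin containing $t$, serving as an upper bound on every other bin, is exactly what the final inequality requires, and no approximation (as flagged for \textsc{Lem}~\ref{lem:pmax}) enters at this stage.
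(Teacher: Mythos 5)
Your proof is correct and takes essentially the same route as the paper's: the softmax cross-entropy gradient $\nabla_{\tilde{x}}\ell_{\operatorname{HLG}} = q(\tilde{x}) - p(t)$, the identification of the Lipschitz constant with $\sup_{\tilde{x},t}\|q(\tilde{x}) - p(t)\|_2$, and the final bound $1 + (N-1)p_{\operatorname{max}}^2$ obtained by placing the predicted distribution at a one-hot vector and bounding the two resulting terms separately. The one genuine refinement you add is the convexity/extreme-point argument (maximum of a convex function over the closed simplex is attained at a vertex, plus the open-versus-closed continuity remark), which rigorously justifies the reduction that the paper only asserts informally as ``consider the worst-case scenario: when the distributions differ the most.''
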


\begin{proof}
Starting from a known result about the gradient of the cross-entropy over discrete vectors with softmax, we can write the partial derivative of $\ell_{\operatorname{HLG}}$ \textit{w.r.t.} $x_j$, which is:
\begin{equation}
\frac{\partial \ell_{\operatorname{HLG}}}{\partial x_j} = q_j(\tilde{x}) - p_j(t)
\end{equation}
Hence:
\begin{equation}
\nabla_{\tilde{x}}\ell_{\operatorname{HLG}}(\tilde{x},t) =%
\big(%
q_0(\tilde{x}) - p_0(t), \ldots, q_{N-1}(\tilde{x}) - p_{N-1}(t)
\big)
\end{equation}
The Lipschitz constant $\Lambda$ measures how large the gradient can get across all possible $\tilde{x} \in \mathbb{R}^N$ and $t \in [a,b]$,
i.e.\ :
\begin{equation}
\Lambda =%
\sup_{\tilde{x},t}{\|\nabla_{\tilde{x}}\ell_{\operatorname{HLG}}(\tilde{x},t)\|_2} =%
\sup_{\tilde{x},t}{\|q(\tilde{x}) - p(t)\|_2}
\end{equation}
In order to find an upper bound on $\Lambda$, consider the \emph{worst-case} scenario: when the distributions differ the most.
The predicted distribution at $\tilde{x}$, $q(\tilde{x})$, is a one-hot vector, without loss of generality with $q_j=1$ and $q_i=0$ for any $i \neq j$.
The transformed target distribution at $t$, $p(t)$, has maximum probability mass in a bin $k$, such that $p_k(t)=p_{\operatorname{max}}$.
In the worst case, $j \neq k$.

We square the norm above, under this worst-case mismatch:
\begin{equation}
\|q(\tilde{x}) - p(t)\|_2^2 = \big(1 - p_j(t)\big)^2 + \sum_{i \neq j}\big(0 - p_i(t)\big)^2
\end{equation}
Since in this imagined scenario, $j \neq k$, $k$ is in the second term.
We therefore upper bound each individual term of the sum by the term that has the highest value: the $k$-th one.
\begin{equation}
\sum_{i \neq j}p_i(t)^2 \leq (N-1) p_k(t)^2 = (N-1) p_{\operatorname{max}}^2
\end{equation}
Because $j \neq k$, all we can do is upper bound the first term $(1 - p_j(t))^2$ by $1$.
This concludes with the final result.
\end{proof}

\begin{lemma}[Maximum probability mass $p_{\operatorname{max}}$]
\label{lem:pmax}
The maximum value $p(t)$ can take on a bin, $p_{\operatorname{max}}$, verifies:
\begin{equation}
p_{\operatorname{max}} \approx \frac{\Delta s}{\sigma\sqrt{2\pi}}
\end{equation}
This approximation tends toward an equality as \textit{(1)} the bin width $\Delta s$ gets smaller \textit{w.r.t.} $\sigma$ ($\Delta s \ll \sigma$), and as \textit{(2)} the interval $[a,b]$ in which $t$ is defined covers most of the Gaussian's probability mass (\textit{e.g.}~$[a,b] \supset [t-3\sigma,t+3\sigma]$). 
\end{lemma}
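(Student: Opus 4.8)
The plan is to reduce $p_{\operatorname{max}}$ to the mass that the $t$-centered Gaussian places in the single bin containing $t$, strip away the truncation normalization using condition (2), optimize that bin-mass over the position of $t$ inside its bin, and finally Taylor-expand. From \textsc{Def}~\ref{def}, $p_{\operatorname{max}} = \max_t p_k(t)$, where $k = k(t)$ is the index of the bin into which $t$ falls (the bin into which the Gaussian dumps the most mass), so the entire argument can be carried out on a single generic bin of width $\Delta s$.

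First I would dispense with the denominator. Condition (2)---that $[a,b]$ captures essentially all of the $t$-centered Gaussian's mass---forces the normalizer $\Phi_0\big((b-t)/\sigma\big) - \Phi_0\big((a-t)/\sigma\big)$ to be $1 - o(1)$, so up to negligible error $p_k(t) = \Phi_0\big((s_{k+1}-t)/\sigma\big) - \Phi_0\big((s_k-t)/\sigma\big)$, the raw $\mathcal{N}(t,\sigma^2)$ mass on $[s_k, s_{k+1}]$.

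Next I would optimize over the offset of $t$ inside the bin. Writing $t = s_k + \delta$ with $\delta \in [0,\Delta s]$, the bin-mass becomes $\Phi_0\big((\Delta s - \delta)/\sigma\big) + \Phi_0\big(\delta/\sigma\big) - 1$; differentiating in $\delta$ and using that the standard normal density $\Phi_0'$ is symmetric and unimodal pins the unique maximizer at $\delta = \Delta s/2$, i.e.\ $t$ at the bin center. There the mass equals $2\,\Phi_0\big(\Delta s/(2\sigma)\big) - 1$. Finally, condition (1), $\Delta s \ll \sigma$, lets me Taylor-expand $\Phi_0$ about $0$: since $\Phi_0(u) - \tfrac{1}{2} = u\,\Phi_0'(0) + O(u^3)$ with $\Phi_0'(0) = 1/\sqrt{2\pi}$, I obtain $2\,\Phi_0\big(\Delta s/(2\sigma)\big) - 1 = \Delta s/(\sigma\sqrt{2\pi}) + O\big((\Delta s/\sigma)^3\big)$, which is exactly the claimed $p_{\operatorname{max}} \approx \Delta s/(\sigma\sqrt{2\pi})$.

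The main obstacle is less a single hard step than the honest bookkeeping of two distinct error sources: the $o(1)$ multiplicative slack in the denominator (governed by condition (2)) and the $O((\Delta s/\sigma)^2)$ relative slack from the Taylor step (governed by condition (1)). Within this, the substep deserving the most care is justifying that the bin-centered placement is a genuine maximizer rather than merely a critical point---which the unimodality of $\Phi_0'$ settles---after which I would confirm that both error terms vanish precisely in the two stated limits, thereby upgrading the approximation toward the claimed equality.
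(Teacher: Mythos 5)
Your proof is correct, and it shares the paper's high-level decomposition---use condition \textit{(2)} to argue the truncation denominator is $\approx 1$, then approximate the mass of the bin containing $t$---but your treatment of that bin mass is genuinely different and more careful than the paper's. The paper's proof is a one-step rectangle heuristic: the $\mathcal{N}(t,\sigma^2)$ density peaks at $1/(\sigma\sqrt{2\pi})$, so assuming the density is roughly uniform over the bin, the numerator is approximated by the rectangle area $\Delta s/(\sigma\sqrt{2\pi})$; it never asks where $t$ sits inside the bin. You instead make the maximization over $t$'s offset explicit, showing via symmetry and unimodality of the normal density that the unique maximizer is the bin center, where the (untruncated) mass is exactly $2\,\Phi_0\big(\Delta s/(2\sigma)\big) - 1$, and only then Taylor-expand to recover $\Delta s/(\sigma\sqrt{2\pi}) + O\big((\Delta s/\sigma)^3\big)$. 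What your route buys is twofold: an explicit error order in $\Delta s/\sigma$ tied precisely to condition \textit{(1)}, and the observation (implicit in your expansion, since the cubic correction is negative) that the rectangle value is an \emph{over}-estimate of the true maximal bin mass---a reassuring fact given that $p_{\operatorname{max}}$ is subsequently used to \emph{upper}-bound the Lipschitz constant in \textsc{Th}~\ref{th:pmax}. What the paper's shorter argument buys is brevity and a physical picture (area of a rectangle under the peak), at the cost of leaving the maximizer and the error structure unexamined. One point of caution both treatments share: if $t$ were allowed near the boundary of $[a,b]$, the shrinking denominator could inflate $p_k(t)$ beyond the stated value, so condition \textit{(2)} is doing real work in your step that discards the normalizer, not just cosmetic work; you invoke it correctly, but it is worth stating that this is what rules out the boundary case.
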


\begin{proof}
$p_{\operatorname{max}}$ denotes the maximum probability mass a bin takes.
It is taken by $p(t)$ at the bin $t$ falls in.
Say it is bin $k$, without loss of generality.
The PDF of the $t$-centered Normal $\mathcal{N}(t,\sigma)$ takes value $1 / \big(\sigma\sqrt{2\pi}\big)$ at $t$.
If we assume that the Gaussian is approximately uniform over the bin containing $t$---bin $k$---we can approximate the numerator of the target transformation $p_i(t)$ (as laid out in \textsc{Def}~\ref{def}) with the \textit{area of the rectangle}:
\begin{equation}
p_{\operatorname{max}} = p_k(t) \approx%
\displaystyle\frac{%
\Delta s / \big(\sigma\sqrt{2\pi}\big)
}{%
\Phi_0\big(\frac{b - t}{\sigma}\big) - \Phi_0\big(\frac{a - t}{\sigma}\big)
}
\end{equation}
The approximation of the \textbf{numerator} by the area of a rectangle becomes increasingly valid (closer to equality) as the bin width decreases \textit{w.r.t.} the statistic $\sigma$.
In addition, the normalization factor in the \textbf{denominator} approaches $1$ as the interval $[a,b]$ covers the space where the Gaussian centered at $t$ would put probability mass.
A good coverage would be ensured if $[t-3\sigma,t+3\sigma] \subset [a,b]$.
\end{proof}

We now conclude the chain of theoretical results and proofs with \textsc{Th}~\ref{th:sigma}, in the main text.
The proof of \textsc{Th}~\ref{th:sigma} is straightforward, following directly from substituting
the result of \textsc{Lem}~\ref{lem:pmax} into \textsc{Th}~\ref{th:pmax}.

\section{Network Architecture}
\label{archi}

The actor, critic, and reward networks all have two hidden layers of width 256 units.
The actor and critic use ReLU activations, while the reward network \emph{all} use LeakyReLU activations with leak $0.05$.
We did \emph{not} use layer normalization \citep{Ba2016-bs} in \emph{any} network, and used orthogonal initialization \citep{Saxe2013-rm,Hu2020-ng} in every network.
We used spectral normalization \citep{Miyato2018-wc} for every layer of the reward network.

We tried the asymmetric architecture proposed in \citep{Nikulin2023-gu}, where predictor and prior networks have different architectures involving new ways of extracting features from state and action in a continuous control context---the context we consider in this work.
Among the techniques in use are bilinear layers and FiLM \citep{Perez2018-vf}.
The authors seem the get benefits from the proposed changes in feature engineering for offline RL.
We however have not seen any benefit from this architecture.
Besides, it has a significant toll on computational complexity.
\textbf{In this work, we have show that it is possible to use a signal learned from random priors to solve tasks in continuous control}, which is what their architecture was claimed to unlock.

\section{Reward Numerics}
\label{rewnum}

Since different losses $\ell$ lead to different scales, it can be hard to determine the effectiveness of a loss design simply because the scale and shift of the resulting rewards might disagree with the agent.
In order to have a controlled environment in which $\ell$ choices can be compared with minimal confounding factors, we adopt a simple scaling and shifting scheme to the designed reward.
It could rely on the mean and standard deviation of the reward over the mini-batch---with an inclusion of an exponential moving average (EMA) with configurable decay,but we went for the most robust option and used percentiles statistics instead.
We were inspired by the \textit{return} rescaling mechanism presented in DreamerV3 \citep{Hafner2023-wk}.
Percentiles are indeed more robust statistics (\textit{e.g.}, against outliers) compared to mean and standard deviation.
We operate as follows.

First we divide the batch of rewards $r$ by $\operatorname{Perc}_{0.95}(r) - \operatorname{Perc}_{0.05}(r)$, the gap between the 5th percentile of the batch and the 95th.
Then, we shift the reward by $\operatorname{Perc}_{0.05}(r)$ to re-center.
We do not use an EMA.
Despite its potential stabilization benefits, it exposes the method to rely too much on older statistics, thereby slowing down the method relatively to the other approaches.
Note, since the percentiles are computed on the batch, the batch size hyper-parameter can not be too small.
Picking a batch size below $16$ for example would be ill-advised and could lead to degenerate cases.
On the flip side, such a scheme is well-adapted to vectorized environments, where data collection is parallelized, and the batch size can usually be scaled up.
We do not use a temperature hyper-parameter in the exponential of the reward.

In RND \citep{Burda2018-vl}, the authors divide the operand of the exponential (like in our case, used to turn the negative loss into a positive reward signal) by a running average of the standard deviation.
This statistic plays of the role of temperature.
We hypothesize that this technique works well for RND because the reward is used as a reward bonus, aiming at better RL exploration.
The method does not care too much about which of two novel states is the most novel, as long as the agent does explore novel states.
It is however of primary importance in our case, which could explain why this technique did not lead to good results in our early experiments.

In RED \citep{Wang2019-pd}, the authors use a different hard-coded temperature for every environment.
The temperature ranges from $\tau=250$ to $\tau=250,000$.
In \textsc{Section}~\ref{exps}, we show results for our implementation of RED, and we apply the adaptive reward treatment described above instead of an unfair tuning of temperature $\tau$ per environment.

\section{CUDA Graphs}
\label{cudagraphs}

We use CUDA Graphs\footnote{\url{https://pytorch.org/blog/accelerating-pytorch-with-cuda-graphs/}} in all the algorithms ran in the context of this work.
CUDA Graphs optimize GPU workloads by capturing a \textbf{static} sequence of operations (\textit{e.g.}, computations and memory transfers) into a \emph{graph} that can be \emph{replayed} with minimal overhead.
\textbf{We generally gained a \emph{3x speedup} on every workflow}, with extra precaution taken to ensure our PyTorch computational graphs were \emph{static}, \textit{i.e.} no conditional behavior based on the value of a tensor in the graph, etc.

\section{Implementation Details}
\label{hps}

All the methods tested in the work share the same actor-critic architecture, for fairness.
The reward networks also align in terms of number of parameters, activations, initializations, etc.
\textbf{Only DiffAIL} adopts a \textbf{different reward network} because it is a diffusion model.
For DiffAIL \citep{Wang2023-kz} however, we took the authors' official implementation of the reward learning process, and made modifications to prevent pervasive \texttt{NaN} occurrences.
Specifically, we replaced Mish activations with LeakyReLU, and  added $\epsilon$ padding to the operands of logarithms and denominators.
We left the DiffAIL reward architecture untouched otherwise, despite being deeper than those used in other methods.
We posited that this would give the diffusion model a fairer chance, despite being misaligned with the rest.

We use the Adam optimizer \citep{Kingma2014-op} for all experiments.

\begin{table}[h]
\centering
\caption{Default Hyper-parameters for NGT Algorithm}
\begin{tabular}{@{}ll@{}}
\toprule
\textbf{Hyper-parameter}        & \textbf{Default Value} \\ \midrule
GPU & True \\
PyTorch's \texttt{compile} & False \\
\texttt{CudaGraphs} & True \\
Number of parallel environments & 4 \\
Action repeat & 1 \\
Observation normalization & False \\
Number of environment steps & $10^7$ \\
Learning starts at timestep & 0 \\
Evaluation steps & 10 \\
Evaluate every & 10000 \\
Evaluation window buffer size & 20 \\
Clip norm actor & 20.0 \\
Replay buffer size ($|\mathcal{D}|$) & $4 \times 10^6$ \\
Minibatch size ($|\mathcal{B}|$)    & 256 \\
Discount factor ($\gamma$)          & 0.99 \\
Polyak target smoothing coefficient ($\tau$) & 0.005 \\
Learning rate -- policy & $3 \times 10^{-4}$ \\
Learning rate -- Q-networks & $1 \times 10^{-3}$ \\
Temperature parameter ($\alpha$) -- auto-tune   & True \\
Temperature parameter ($\alpha$) -- initial value   & 0.2 \\
Target entropy ($\mathcal{H}$)     & $-|\mathcal{A}|$ (dimension of the action space) \\
Number of gradient steps per update & 1 \\
Number of environment steps per update & 1 \\
\midrule
Learning rate -- reward & $1 \times 10^{-3}$ \\
Spectral normalization & True \\
Gradient penalty & False \\
Output embedding size & 32 \\
Output post-\texttt{tanh} rescale & 5.0 \\
\midrule
$\ell_{\operatorname{HLG}}$ -- Support $[a,b]$ & $[-1,1]$ \\
$\ell_{\operatorname{HLG}}$ -- Number of bins $N$ & 21 \\
$\ell_{\operatorname{HLG}}$ -- agent-side $\sigma$ & 0.05 \\
$\ell_{\operatorname{HLG}}$ -- expert-side $\sigma$ & 0.25 \\
\midrule
Number of behavioral cloning iterations & $10^7$ \\
\bottomrule
\end{tabular}
\end{table}

\section{Main Environments}
\label{dims}

We report below the dimensionalities of the state and action spaces for the continuous control tasks considered in this work, based on the Gymnasium MuJoCo \texttt{v4} environments documentation \citep{towers2024gymnasium}.

\begin{table}[h]
\centering
\caption{Dimensionalities of the state and action spaces for selected Gymnasium environments. These environments are commonly used benchmarks for continuous control in reinforcement learning.}
\label{tab:env_dims}
\vspace{0.5em}
\begin{tabular}{lcc}
\toprule
\textbf{Environment} & \textbf{State Dim.} & \textbf{Action Dim.} \\
\midrule
Ant-v4         & 111 & 8  \\
HalfCheetah-v4 & 17  & 6  \\
Pusher-v4      & 23  & 7  \\
Walker2d-v4    & 17  & 6  \\
Humanoid-v4    & 376 & 17 \\
\bottomrule
\end{tabular}
\end{table}

\section{Ablation Studies}
\label{ablations}

To better understand the design choices underlying our method, we conducted a comprehensive set of ablation studies. These experiments span four random seeds and varying numbers of expert demonstrations, \textbf{totaling 92 training runs}. The results, summarized below, provide further empirical support for the architectural and algorithmic decisions presented in the main paper.

We first assess the contribution of the \textbf{histogram loss} ``Gaussian type'' $\ell_{\operatorname{HLG}}$ by replacing it with a Mean Squared Error (MSE) Softmax loss on the \texttt{Humanoid} environment---the output embeddings returned by the predictor network $f_{\xi}$ and prior network $f^{\dagger}_{\xi}$ are first wrapped with a softmax, before being compared with the MSE.
As shown in \textsc{Figure}~\ref{fig:abl_mse}, the MSE Softmax loss fails to produce meaningful learning signals, leading to poor policy performance. In contrast, the histogram loss $\ell_{\operatorname{HLG}}$ loss enables successful and stable training on this challenging high-dimensional benchmark.

Next, we evaluate the generalization capacity of the histogram loss $\ell_{\operatorname{HLG}}$ across environments with \emph{lower} state-action dimensionality the \texttt{Humanoid}.
As illustrated in \textsc{Figure}~\ref{fig:abl_hlg_generalization}, $\ell_{\operatorname{HLG}}$ replicates the optimal results reported in the main text for non-Humanoid environments, albeit requiring different hyperparameter settings (for $\ell_{\operatorname{HLG}}$) than the \texttt{Humanoid}.
In particular, the number of bins $N$, support width $a$ and $b$, and Gaussian smoothing factor $\sigma$ must be adapted to the environment.
This supports the heuristic that these parameters should \textbf{scale with task difficulty}, in a manner similar to entropy target scaling in SAC \citep{Haarnoja2018-bm}.
The histogram loss is therefore able to make NGT optimal in more than just the \texttt{Humanoid}; we just prioritized the use of pairing losses $\ell$ \textbf{without hyper-parameters} unless required.
Hence, in the results reported in \textsc{Section}~\ref{exps}, we only used $\ell_{\operatorname{HLG}}$ for the \texttt{Humanoid}.

We also ablate the \textbf{initialization scheme} used for the output embedding layers of the prior and predictor networks.
As seen in \textsc{Figure}~\ref{fig:abl_init}, switching from \textbf{orthogonal} initialization \citep{Saxe2013-rm} to Kaiming initialization \citep{He2015-he} introduces significant training instability and reduced performance.
These results highlight the importance of initialization in preserving gradient flow and inducing stable dynamics in the learned reward model.
We refer the reader to \textsc{Section}~\ref{controlf} where we justify the design choice of opting for orthogonal initialization for the output embedding of the prior network $f^{\dagger}_{\xi}$.

Another critical component is \textbf{spectral normalization}.
We remove spectral normalization from the prior and predictor networks and observe a complete failure of training, as reported in \textsc{Figure}~\ref{fig:abl_sn}.
This indicates that constraining the Lipschitz constant of these networks is essential for stable and reliable reward learning.
Note, \textbf{gradient penalization was not required}, unlike for DAC/SAM \citep{Blonde2020-dh}.

Finally, we study the impact of the output embedding dimensionality in the reward model by varying it across \{8, 16, 32, 64\}.
As shown in \textsc{Figure}~\ref{fig:abl_embeddim}, performance is relatively stable for sizes 16 and above, but \textbf{degrades substantially} for 8-dimensional embeddings.
This suggests that excessively compressing the output representation harms expressiveness and impedes optimization.

Taken together, these ablations substantiate the design principles of our approach and further validate the empirical findings presented in the main paper.

\begin{figure*}[!t]
    \centering
    \includegraphics[width=0.65\linewidth]{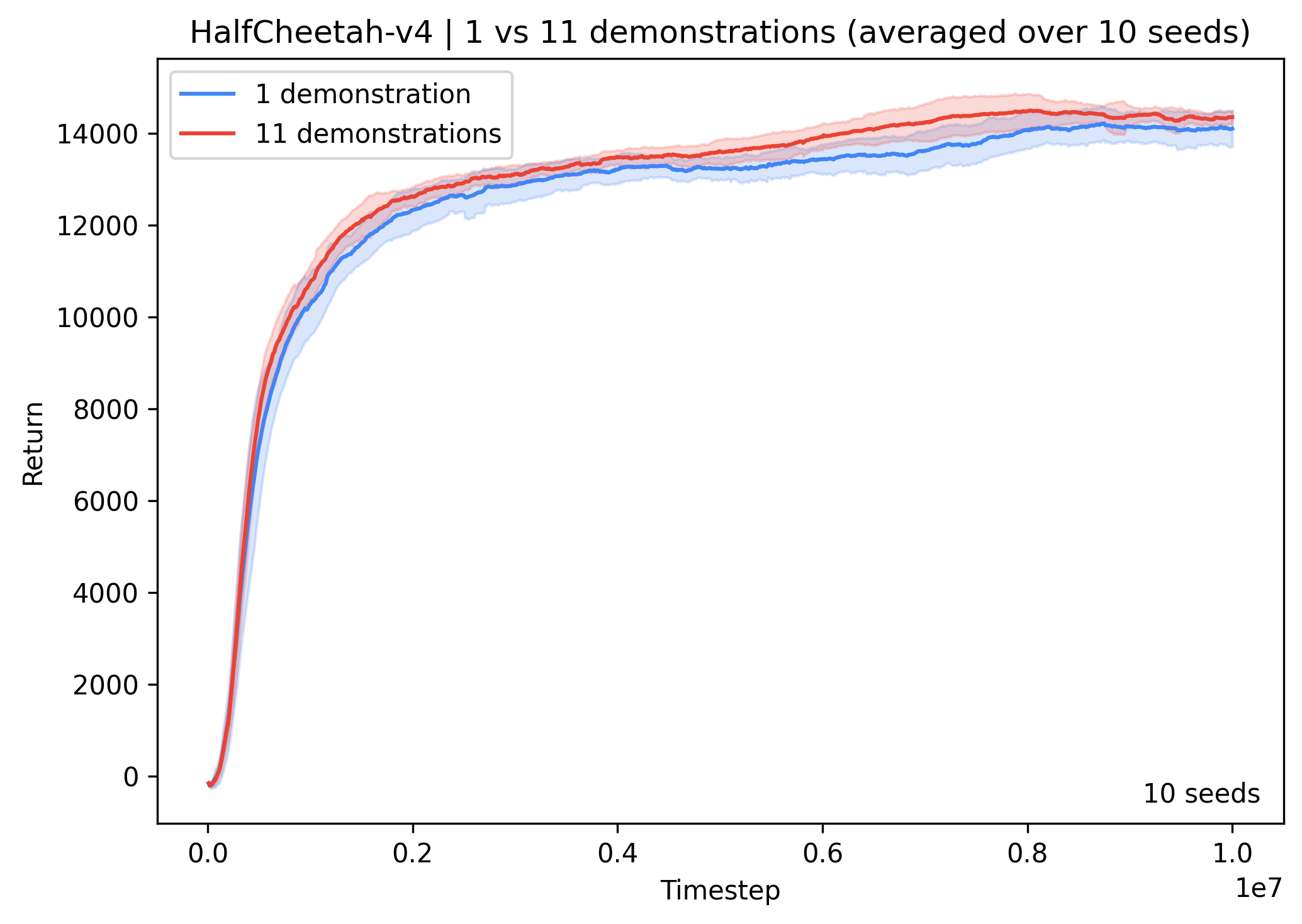}
    \caption{Performance of NGT on \texttt{HalfCheetah-v4} with 1 and 11 demonstrations,
    averaged over 10 seeds. Shaded regions show standard deviation across seeds.}
    \label{fig:10seeds}
\end{figure*}

\begin{figure*}[!t]
    \centering
    \includegraphics[width=0.35\linewidth]{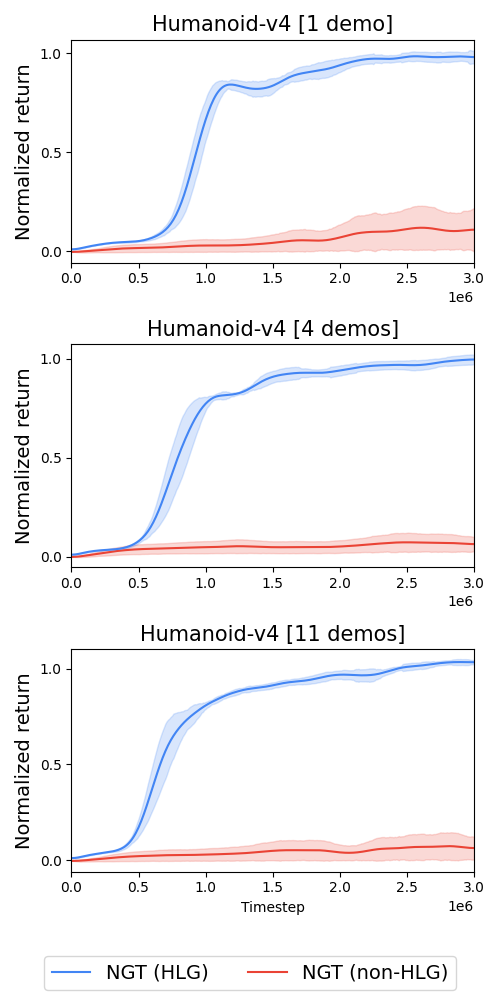}
    \caption{Comparison of the histogram loss $\ell_{\operatorname{HLG}}$ and a Mean Squared Error (MSE) Softmax loss in NGT on the \texttt{Humanoid} environment. The MSE variant fails to yield meaningful learning, while $\ell_{\operatorname{HLG}}$ enables successful and stable training.}
    \label{fig:abl_mse}
\end{figure*}

\begin{figure*}[!t]
    \centering
    \includegraphics[width=0.55\linewidth]{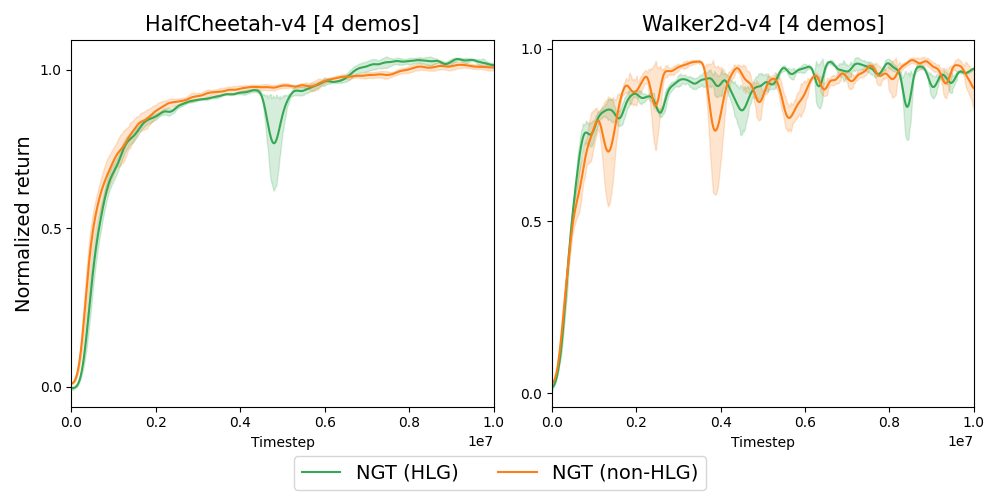}
    \caption{Performance of the histogram loss $\ell_{\operatorname{HLG}}$ on non-\texttt{Humanoid} environments. Optimal results are recovered when adapting hyper-parameters such as support width, number of bins, and smoothing factor $\sigma$, highlighting the need for environment-specific scaling.}
    \label{fig:abl_hlg_generalization}
\end{figure*}

\begin{figure*}[!t]
    \centering
    \includegraphics[width=0.65\linewidth]{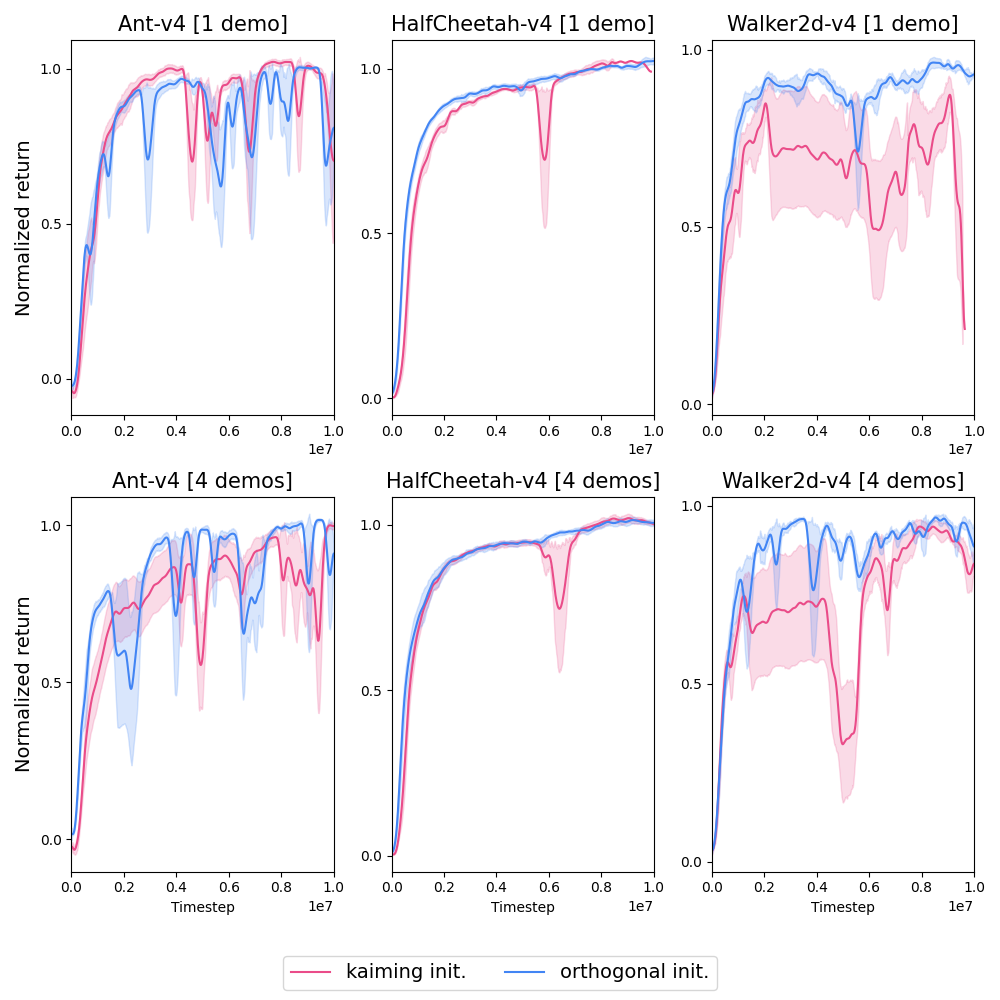}
    \caption{Impact of output embedding initialization scheme. Orthogonal initialization leads to higher stability and performance, whereas Kaiming initialization results in degraded learning and increased instability.}
    \label{fig:abl_init}
\end{figure*}

\begin{figure*}[!t]
    \centering
    \includegraphics[width=0.65\linewidth]{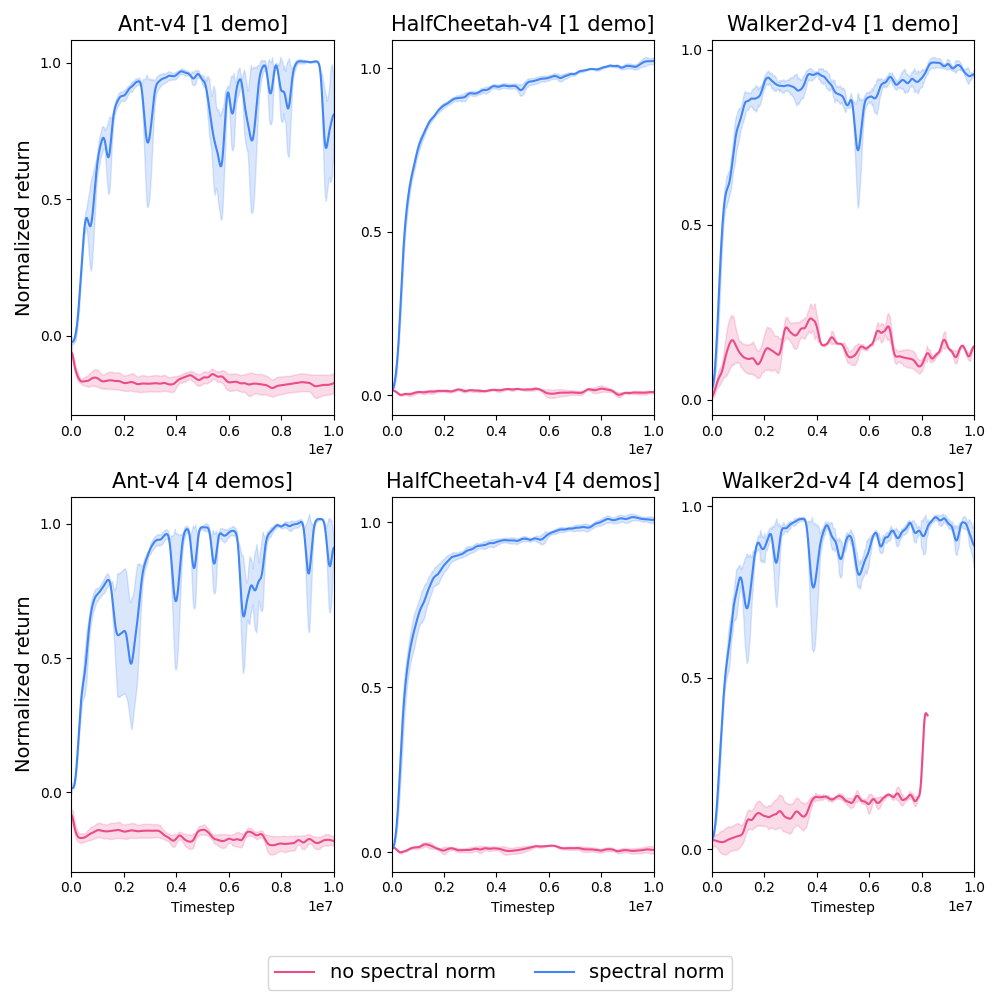}
    \caption{Effect of removing spectral normalization from the reward model's prior and predictor networks. Training becomes unstable and collapses completely, underscoring the necessity of spectral normalization for stable optimization.}
    \label{fig:abl_sn}
\end{figure*}

\begin{figure*}[!t]
    \centering
    \includegraphics[width=0.45\linewidth]{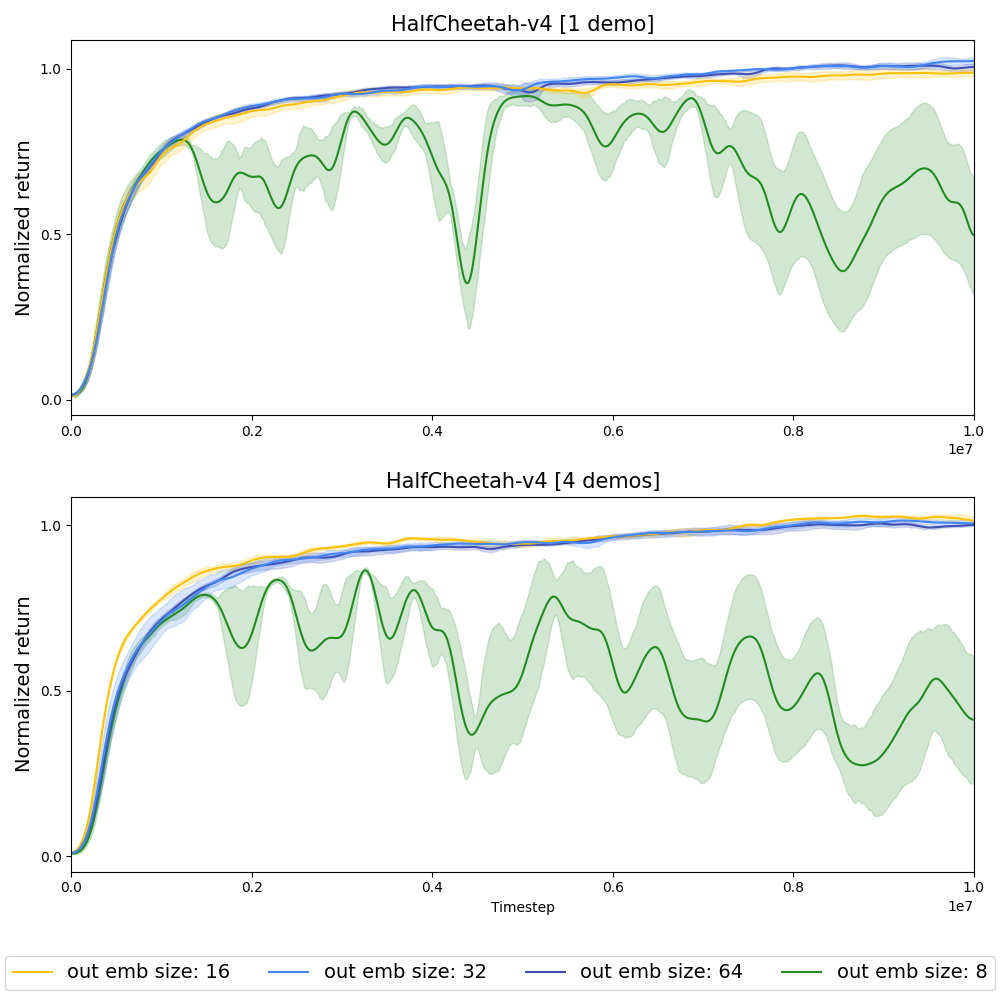}
    \caption{Effect of output embedding dimensionality in the reward model. Performance is robust for sizes 16, 32, and 64, but degrades significantly at dimension 8.}
    \label{fig:abl_embeddim}
\end{figure*}

\clearpage

\section{Extra Environments}
\label{extraenvs}

To further evaluate the robustness and generalization capacity of imitation learning agents, we extend our experiments to three environments from the DeepMind Control Suite (DMC) \citep{Tassa2018-eq}: \texttt{walker-walk}, \texttt{cheetah-run}, and \texttt{finger-spin}.
Compared to Gymnasium tasks, DMC environments exhibit \textbf{greater stochasticity in their initial state distributions}. This increased variability poses a significantly \textbf{harder generalization challenge}, as agents must adapt to a broader range of starting conditions rather than overfitting to narrow behavioral modes.
This makes DMC a natural and meaningful extension for benchmarking IL methods, particularly in the \textbf{low-data regime}.
Agents must not only mimic expert behavior but generalize it across unseen trajectories and perturbed states---highlighting the inductive biases and stability of the learning algorithm.

\textsc{Figure}~\ref{fig:dmc} shows performance curves for NGT and several baseline methods across these three DMC tasks.
We observe that NGT consistently achieves strong returns and exhibits remarkable training stability.
In contrast, baseline methods often show high variance or fail to approach expert behavior at all, failing to generalize effectively under the broader initial state distributions.
These results \textbf{further validate the generalization ability and sample efficiency of NGT} in more challenging, high-variance control settings.

\begin{figure*}[!t]
    \centering
    \includegraphics[width=0.75\linewidth]{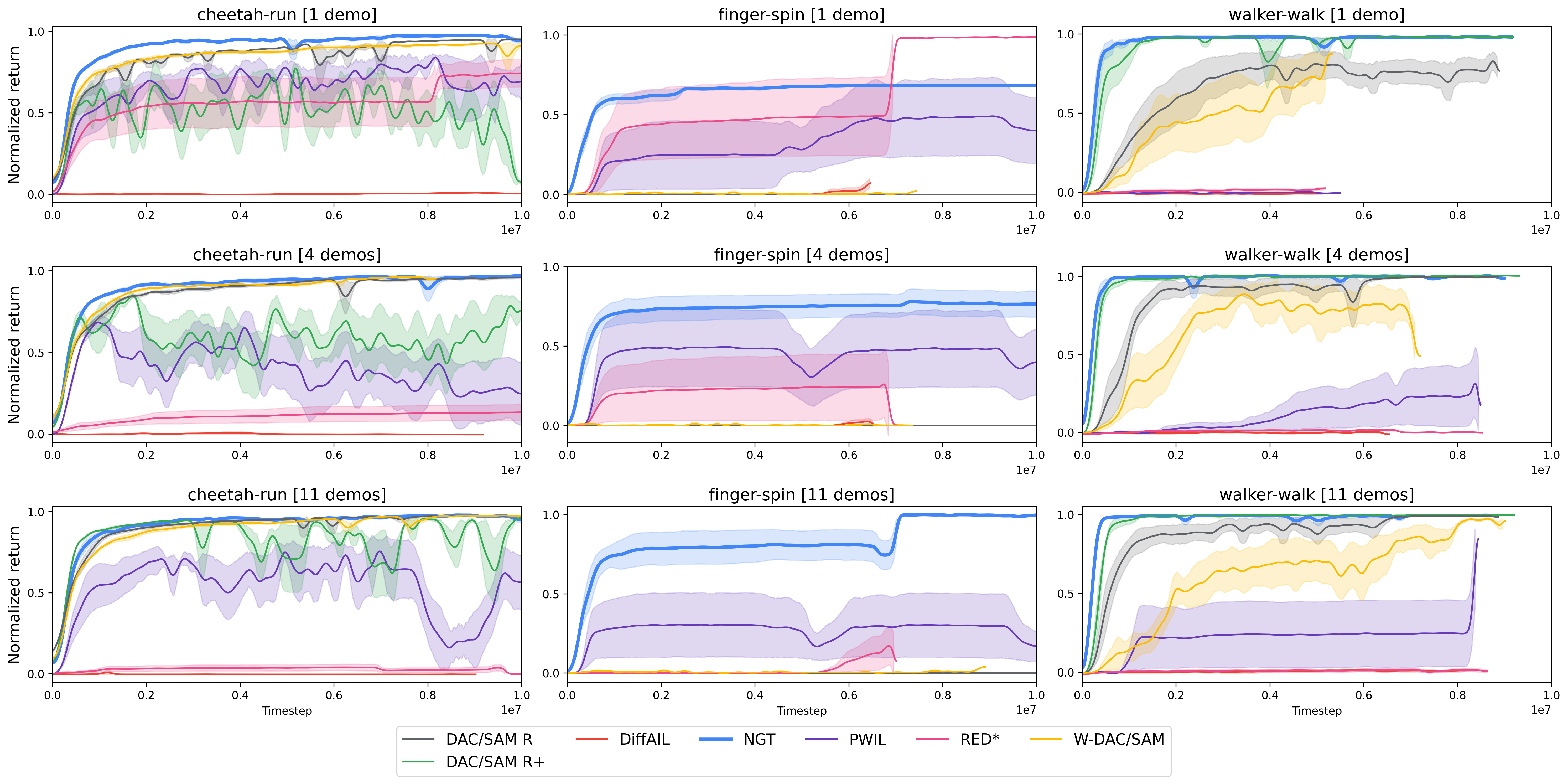}
    \caption{Performance of NGT and baseline methods on three DeepMind Control Suite tasks over various environments and numbers of demonstrations. DMC environments introduce greater variability in the initial state distribution, providing a more stringent test of generalization. NGT achieves strong and stable returns across all tasks, demonstrating robustness to increased stochasticity and variation in starting conditions.}
    \label{fig:dmc}
\end{figure*}

\clearpage

\section{Speeds}
\label{speed}

We report the computational speed of the imitation learning methods we compare in \textsc{Section}~\ref{exps}, measured in \textbf{steps per second} (sps) at 200k environment time-steps.
This allows us to assess their efficiency on two high-dimensional continuous control tasks: \texttt{Humanoid-v4} and \texttt{Walker2d-v4}.

Since the reported speeds are expressed in steps per second, \textbf{higher is better}.

\begin{table}[ht]
\centering
\caption{Speed (steps per second) of compared methods at 200k time-steps.}
\label{tab:speed-comparison}
\begin{tabular}{lcc}
\toprule
\textbf{Method} & \textbf{Humanoid} & \textbf{Walker2d} \\
\midrule
PWIL           & 631 & 749 \\
DiffAIL        & 659 & 929 \\
\texttt{DAC/SAM}            & 741 & 953 \\
\texttt{MMD-DAC/SAM}     & 749 & 977 \\
\texttt{W-DAC/SAM}       & 763 & 978 \\
\textbf{NGT}            & \textbf{712} & \textbf{967} \\
\bottomrule
\end{tabular}
\end{table}

\texttt{RED*} exhibits runtime performance on par with NGT.

NGT demonstrates competitive runtime efficiency with near-horizontal speed curves by 200k time-steps in both environments. This suggests that its performance stabilizes early, avoiding the degradation seen in other methods. DiffAIL, by contrast, exhibits a noticeable drop in speed over time, indicative of its steeper negative slope and greater runtime overhead.
\texttt{DAC/SAM} achieves high speeds overall, though its reliance on gradient penalization introduces an initial computational burden.
Nevertheless, by 200k steps, its speed curve also flattens, similar to NGT.
PWIL is significantly slower, reflecting its less efficient reward computation.
Among all methods, \texttt{MMD-DAC/SAM} and \texttt{W-DAC/SAM} are the fastest by a small margin, particularly in the more demanding \texttt{Humanoid-v4} task.

Overall, the results show that \textbf{NGT maintains a strong balance between computational efficiency and learning performance}, with speed profiles comparable to the fastest baseline methods while avoiding the pitfalls of runtime degradation.

\end{document}